\newcommand{\wo}[1]{\widetilde{\mathcal{O}}\left( #1 \right)}
\newcommand{\bo}[1]{\mathcal{O}\left( #1 \right)}
\newcommand{\E}{\mathbf{E}}
\definecolor{darkgreen}{RGB}{0, 100, 0}
\title{
Towards Global Optimality for Practical Average Reward Reinforcement Learning without Mixing Time Oracles
}
\author[1]{Bhrij Patel} 
\author[2]{Wesley A. Suttle} 
\author[3]{Alec Koppel}
\author[4]{Vaneet Aggarwal} 
\author[2]{Brian M. Sadler}
\author[5]{Amrit Singh Bedi}
\author[1]{Dinesh Manocha}
\affil[1]{University of Maryland, College Park}
\affil[2]{U.S. Army Research Laboratory, MD, USA}
\affil[3]{JP Morgan AI Research, NYC}
\affil[4]{Purdue University}
\affil[5]{University of Central Florida}
\renewcommand{\eqref}[1]{(\ref{#1})}
\begin{document}
\date{}
\maketitle

\begin{abstract}
    In the context of average-reward reinforcement learning, the requirement for oracle knowledge of the mixing time, a measure of the duration a Markov chain under a fixed policy needs to achieve its stationary distribution, poses a significant challenge for the global convergence of policy gradient methods. This requirement is particularly problematic due to the difficulty and expense of estimating mixing time in environments with large state spaces, leading to the necessity of impractically long trajectories for effective gradient estimation in practical applications.
To address this limitation, we consider the Multi-level Actor-Critic (MAC) framework, which incorporates a Multi-level Monte-Carlo (MLMC) gradient estimator. With our approach, we effectively alleviate the dependency on mixing time knowledge, a first for average-reward MDPs global convergence. Furthermore, our approach exhibits the tightest available dependence of $\bo{\sqrt{\tau_{mix}}}$ known from prior work. With a 2D grid world goal-reaching navigation experiment, we demonstrate that MAC outperforms the existing state-of-the-art policy gradient-based method for average reward settings. 
\end{abstract}

\section{Introduction}


In reinforcement learning (RL) problems, temporal dependence of data breaks the independent and identically distributed (i.i.d.) assumption commonly encountered in machine learning analyses, rendering the theoretical analysis of RL methods challenging. In discounted RL, the impact of temporal dependence is typically mitigated, as the effect of the discount factor renders the stationary behavior of the induced Markov chains irrelevant. On the other hand, in average-reward RL, stationary behavior under induced policies is of fundamental importance. In particular, understanding the effect of mixing time, a measure of how long a Markov chain takes to approach stationarity, is critical to the development and analysis of average-reward RL methods \citep{suttle2023beyond, riemer2021continual}. Given the usefulness of the average-reward regime in applications such as robotic locomotion \citep{zhang2021}, traffic engineering \citep{geng2020multi}, and healthcare \citep{ling2023cooperating}, improving our understanding of the issues inherent in average-reward RL is increasingly important.

Key to theoretically understanding a learning method is characterizing its convergence behavior. For a method to be considered sound, we should ideally be able to prove that, under suitable conditions, it converges to a globally optimal solution while remaining sample-efficient. Convergence to global optimality of policy gradient (PG) methods \citep{sutton2018reinforcement}, a subset of RL methods well-suited to problems with large and complex state and action spaces, has been extensively studied in the discounted setting \citep{bhandari2019global, agarwal2020optimality, mei2020global, liu2020improved, bedi2022hidden}. Due to gradient estimation issues arising from the mixing time dependence inherent in the average-reward setting, however, the problem of obtaining global optimality results for average-reward PG methods remained open until recently.

In \citet{bai2023regret}, the Parameterized Policy Gradient with Advantage Estimation (PPGAE) method was proposed and shown to converge to a globally optimal solution in average-reward problems under suitable conditions. However, the implementation of the PPGAE algorithm relies on oracle knowledge of mixing times, which are typically unknown and costly to estimate, and requires extremely long trajectory lengths at each gradient estimation step. These drawbacks render PPGAE costly and sample-inefficient, leaving open the problem of developing a practical average-reward PG method that enjoys global optimality guarantees.
Recently, \citet{suttle2023beyond} proposed and analyzed the Multi-level Actor-critic (MAC) algorithm, an average-reward PG method that enjoys state-of-the-art sample complexity, avoids oracle knowledge of mixing times, and leverages a multi-level Monte Carlo (MLMC) gradient estimation scheme to keep trajectory lengths manageable. Despite these advantages, convergence to global optimality has not yet been provided for the MAC algorithm.

In this paper, we establish for the first time convergence to global optimality of an average-reward PG algorithm that does not require oracle knowledge of mixing times, uses practical trajectory lengths, and enjoys the best known dependence of convergence rate on mixing time.
To achieve this, we extend the convergence analysis of \cite{bai2023regret} to the MAC algorithm of \cite{suttle2023beyond}, closing an outstanding gap in the theory of average-reward PG methods. In addition, we provide goal-reaching navigation  results illustrating the superiority of MAC over PPGAE, lending further support to our theoretical contributions.
We summarize our contributions as follows:
\begin{itemize}
    \item We prove convergence of MAC to global optimality in the infinite horizon average reward setting.
    \item Despite lack of mixing time knowledge, we achieve a tighter mixing time dependence, $\bo{\sqrt{\tau_{mix}}}$, than previous average-reward PG algorithms.
    \item We highlight the practical feasibility of the MAC compared with PPGAE by empirically comparing their sample complexities in a 2D gridworld goal-reaching navigation task where MAC achieves a higher reward.
\end{itemize}
%
%
The paper is organized as follows: in the next section we given an overview of related works in policy gradient algorithms and mixing time; Section \ref{section:prob_formulation} describes general problem formulation for average reward policy gradient algorithms and then specifically details the MAC algorithm along with PPGAE from \cite{bai2023regret}; Section \ref{section:global_convergence} presents our global convergence guarantees of MAC and provides a discussion comparing the practicality of MAC to PPGAE. We also provide a 2D gridworld goal-reaching navigation experiment where MAC achieves a higher reward than PPGAE; we the end paper with conclusion and discussion on future work. 

\begin{table*}[t]
    \centering
    \caption{ This table compares the different policy gradient algorithms for the average reward setting and their global convergence rates. Out of all the papers with an explicit dependence on mixing time, MAC from \cite{suttle2023beyond}, which we analyze in this paper, has the tightest dependence.\vspace{2mm}}
    \resizebox{\textwidth}{!}{
        \begin{tabular}{|c||c|c|c|c|c|}
    	\hline
    	Algorithm & Reference  & Mixing Time Known & Mixing Time Dependence & Convergence Rate & Parameterization \\

        \hline
            FOPO & \cite{wei2021learning} &  Yes & N/A &  $\wo{T^{-\frac{1}{2}}}$ & Linear \\
    	\hline
            OLSVI.FH & \cite{wei2021learning} &  Yes & N/A &  $\wo{T^{-\frac{1}{4}}}$ & Linear \\
    	\hline
             MDP-EXP2 & \cite{wei2021learning} &  Yes & $\wo{\sqrt{\tau_{mix}^3}}$ &  $\wo{T^{-\frac{1}{2}}}$ & Linear \\
    	\hline
            PPGAE & \cite{bai2023regret} & Yes & $\wo{\tau_{mix}^2}$ &  $\wo{T^{-\frac{1}{4}}}$ & General \\
    	\hline
           MAC \textbf{(This work)} & \cite{suttle2023beyond}  & No & $\wo{\sqrt{\tau_{mix}}}$ &  $\wo{T^{-\frac{1}{4}}}$ & General \\
    	\hline
        \end{tabular}
    }

    \label{table2}
\end{table*}

\section{Related Works} \label{Related_Works}



In this section, we provide a brief overview of the related works for global optimality of policy gradient algorithms and for mixing time. 



\textbf{Policy Gradient.} Its global optimality has been shown to exist for softmax \citep{mei2020global} and tabular \citep{bhandari2019global, agarwal2020optimality} parameterizations. For the discounted reward setting \citet{liu2020improved} provided a general framework for global optimality for PG and natural PG methods for the discounted reward setting. Recently, \citet{bai2023regret} adapted this framework for the average reward infinite horizon MDP with general policy parameterization. We wish to apply this framework for the MAC algorithm to introduce a global optimality analysis in the average reward setting with no oracle knowledge of mixing time.

\textbf{Mixing Time.} Previous works have emphasized the challenges and infeasibility of estimating mixing time \citep{hsu2015, wolfermixing2020} in complex environments. Recently, \citet{patel2023ada} used policy entropy as a proxy variable for mixing time for an adaptive trajectory length scheme. Previous works that assume oracle knowledge of mixing time such as \citet{bai2023regret, duchi2012, bresler2020} are limited in practicality. In \citet{suttle2023beyond}, they relaxed this assumption with their proposed Multi-level Actor-Critic (MAC) while still recovering SOTA convergence. In this paper, we aim to establish the global convergence of MAC and highlight its tighter dependence on mixing time despite no oracle knowledge.

\section{Problem Formulation}\label{section:prob_formulation}

    

\subsection{Average Reward Policy Optimization}

Reinforcement learning problem with the average reward criterion may be formalized as a Markov decision process $\mathcal{M} := (\mathcal{S}, \mathcal{A}, \mathbb P, r)$. In this tuple expression, $\mathcal{S}$ and $\mathcal{A}$ are the finite state and action spaces, respectively; $\mathbb P(\cdot~|~s,a)$ maps the current state $s \in \mathcal{S}$ and action $a \in \mathcal{A}$ to the conditional probability distribution of next state $s^\prime \in \mathcal{S}$, and $r: \mathcal{S} \times \mathcal{A} \to [0, r_{\max}]$ is a bounded reward function. An agent, starting from state $s_t\in\mathcal{S}$, selects actions $a_t\in\mathcal{A}$ which causes a transition to a new state $s_t^\prime\sim \mathbb P(\cdot~|~s_t,a_t)$ and the environment reveals a reward $r(s_t,a_t)$. Actions may be selected according to a policy $\pi (\cdot ~|~ s)$, which is a  distribution over action space $\mathcal{A}$ given current state $s$. 

We aim to maximize the long-term average reward $J(\pi):=\lim_{T\rightarrow \infty}\mathbb{E}\left[\frac{1}{T}\sum_{t=0}^Tr(s_t,a_t)\right]$ by finding the optimal policy $\pi$. The policy is parameterized by  vector $\theta \in \mathbb R^q$, where $q$ denotes the parameter dimension, and the policy dependence on $\theta$ is indicated via the notation $\pi_{\theta}$.  Parameterization in practice can vary widely, from neural networks to tabular representations. This work, like in \cite{bai2023regret}, aims to provide a global convergence guarantee with no assumption on policy parameterization. With this notation, we can formalize the objective as solving the following maximization problem: 

%

\begin{align}\label{eq:policy_opt}
    \max_{\theta} J(\pi_{\theta}):=\lim_{T\rightarrow \infty}\mathbb{E}_{s_{t+1}\sim \mathbb{P}(\cdot|s_t,a_t), a_t\sim\pi_{\theta}(\cdot|s_t)}\left[R_T\right],
\end{align}
where $R_T:=\frac{1}{T}\sum_{t=0}^Tr(s_t,a_t)$.  Observe that, in general, \eqref{eq:policy_opt} is non-convex with respect to $\theta$, which is the critical challenge of applying first-order iterations to solve this problem -- see \cite{zhang2020global,agarwal2020optimality}. Further define the stationary distribution of induced by a parameterized policy as,
\begin{align}
    d^{\pi_{\theta}}(s) = \lim_{T\rightarrow \infty}\dfrac{1}{T}\left[\sum_{t=0}^{T-1} \mathrm{Pr}(s_t=s|s_0\sim \rho, \pi_{\theta})\right].
\end{align}

As we will later discuss, the induced $d^{\pi_{\theta}}$ for a given $\theta$ is unique and therefore, is agnostic to initial distribution $\rho$, due to ergodic assumption. We are now able to express the average reward with respect $d^{\pi_{\theta}}$ induced by a parameterized policy as $J(\pi_{\theta})= \mathbb{E}_{s\sim d^{\pi_{\theta}}, a\sim {\pi_{\theta}}}[r(s,a)]$.  
%
This equation thus reveals the explicit dependence our optimization problem has on $d^{\pi_{\theta}}$. It is assumed that the data sampled comes from the unique stationary distribution. Any samples that are generated before the induced Markov Chain reaches a stationary distribution are known as burn-out samples and lead to noisy gradients. Thus, knowing the time it takes an induced Markov Chain has reached its stationary distribution is a crucial element for policy gradient algorithms. The quantity is known as \textit{mixing time} $\tau_{mix}$ as is defined as,

\begin{definition}[$\epsilon$-Mixing Time] Let $d^{\pi_{\theta}}$ denote the stationary distribution of the Markov chain induced by $\pi_{\theta}$. We first define the metric,
\begin{equation}
m(t; \theta) := \sup_{s\in \mathcal{S}} \|P^t(\cdot | s)- d^{\pi_{\theta}}(\cdot) \|_{TV},
\end{equation}
where $\|\cdot\|_{TV}$ is the total variation distance.

The $\epsilon$-mixing time of a Markov chain is defined as%
\begin{equation}\label{mixing time}
\tau_{mix}^\theta(\epsilon) := \inf \{t: m(t; \theta) \leq \epsilon\},
\end{equation}
Typical in PG analysis, mixing time is defined as $\tau_{mix}^\theta:=\tau_{mix}^\theta(1/4)$ as it provides a result in that $m(l\tau^{\theta}_{mix}; \theta) \leq 2^{-l}\}$ \cite{dorfman2022}. We further define $\tau_{mix} = \max_{t \in [T]} \tau_{mix}^{\theta_t}$.
\end{definition}

In prior works such as \cite{duchi2012, bresler2020, bai2023regret, wei2021learning}, $\tau_{mix}^\theta$ is assumed to be known. However for complex environments this is rarely the case \citep{hsu2015, wolfermixing2020}. In Section \ref{section:mac} we discuss the Multi-level Actor-Critic algorithm and how it relaxes this assumption. We will first, however, briefly introduce the elements of a vanilla actor-critic in Section \ref{section:ac}.


To do so, we define  the action-value ($Q$) function as 
\begin{align}\label{q_function}
    Q^{\pi_{\theta}}(s,a) =& \mathbb{E}\Bigg[\sum_{t=0}^{\infty}\mathbb [r(s_t,a_t) - J(\pi_{\theta})]\Bigg],
    \end{align}
    such that $s_0 = s, a_0 = a$, and action $a\sim \pi_{\theta}$. We can then further write the state value function as
    \begin{align}\label{v_function}
     V^{\pi_{\theta}}(s) =& \mathbb E_{a \sim \pi_{\theta}(\cdot|s)}[Q^{\pi_{\theta}}(s,a)].
\end{align}
Using the Bellman's Equation, we can write, from \eqref{q_function} and \eqref{v_function}, the value of a state $s$, in terms of another as \citep{puterman2014markov}
\begin{align}\label{bellman}
V^{\pi_{\theta}}(s)
&= \mathbb E[r(s,a) - J(\pi_{\theta}) + V^{\pi_{\theta}}(s')],
\end{align}
where the expectation is over $a \sim \pi_{\theta}(\cdot|s), s'\sim \mathbb P(\cdot |a, s)$. 

We also define the advantage term as follows, 
\begin{align}
    A^{\pi_{\theta}}(s, a) \triangleq Q^{\pi_{\theta}}(s, a) - V^{\pi_{\theta}}(s).
\end{align}

With this, we can now present the well-known policy gradient theorem established by \cite{sutton1999policy},
\begin{lemma}
    \label{lemma_grad_compute}
    The gradient of the long-term average reward can be expressed as follows.
    \begin{align}
        \!\!\!\!\nabla_{\theta} J(\theta)\!=\!\mathbf{E}_{s\sim d^{\pi_{\theta}},a\sim\pi_{\theta}(\cdot|s)}\!\bigg[ \!A^{\pi_{\theta}}(s,a)\nabla_{\theta}\log\pi_{\theta}(a|s)\!\bigg].
    \end{align}
\end{lemma}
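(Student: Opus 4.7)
The plan is to adapt the classical derivation of the policy gradient theorem, due to \citet{sutton1999policy}, to the average-reward setting by starting from the differential Bellman equation \eqref{bellman} and using stationarity of $d^{\pi_{\theta}}$. First I would expand the Bellman equation by writing $V^{\pi_{\theta}}(s) = \sum_{a} \pi_{\theta}(a|s) Q^{\pi_{\theta}}(s,a)$, where $Q^{\pi_{\theta}}(s,a) = r(s,a) - J(\pi_{\theta}) + \sum_{s'} \mathbb{P}(s'|s,a) V^{\pi_{\theta}}(s')$. Differentiating in $\theta$ and applying the product rule yields, for every state $s$,
\begin{equation*}
\nabla_{\theta} V^{\pi_{\theta}}(s) = \sum_{a} \bigl[\nabla_{\theta} \pi_{\theta}(a|s)\bigr] Q^{\pi_{\theta}}(s,a) + \sum_{a} \pi_{\theta}(a|s) \sum_{s'} \mathbb{P}(s'|s,a) \nabla_{\theta} V^{\pi_{\theta}}(s') - \nabla_{\theta} J(\pi_{\theta}).
\end{equation*}

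The second step is to isolate $\nabla_{\theta} J(\pi_{\theta})$ and eliminate the unknown term $\nabla_{\theta} V^{\pi_{\theta}}$ by taking the expectation with respect to the stationary distribution $d^{\pi_{\theta}}$. Multiplying both sides by $d^{\pi_{\theta}}(s)$ and summing over $s$, the key invariance property $\sum_{s} d^{\pi_{\theta}}(s) \sum_{a} \pi_{\theta}(a|s)\mathbb{P}(s'|s,a) = d^{\pi_{\theta}}(s')$ (finite by the ergodicity hypothesis used implicitly throughout Section~\ref{section:prob_formulation}) forces the two $\nabla_{\theta} V^{\pi_{\theta}}$ terms to cancel exactly. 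What remains after rearrangement is
\begin{equation*}
\nabla_{\theta} J(\pi_{\theta}) = \sum_{s} d^{\pi_{\theta}}(s) \sum_{a} \bigl[\nabla_{\theta}\pi_{\theta}(a|s)\bigr] Q^{\pi_{\theta}}(s,a) = \mathbf{E}_{s\sim d^{\pi_{\theta}},\, a\sim\pi_{\theta}(\cdot|s)}\bigl[Q^{\pi_{\theta}}(s,a)\nabla_{\theta}\log\pi_{\theta}(a|s)\bigr],
\end{equation*}
after the log-derivative identity $\nabla_{\theta}\pi_{\theta}(a|s) = \pi_{\theta}(a|s)\nabla_{\theta}\log\pi_{\theta}(a|s)$.

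The third step is to replace $Q^{\pi_{\theta}}$ by the advantage $A^{\pi_{\theta}}$. Because $V^{\pi_{\theta}}(s)$ does not depend on $a$, the baseline identity $\mathbf{E}_{a\sim\pi_{\theta}(\cdot|s)}[\nabla_{\theta}\log\pi_{\theta}(a|s)] = \nabla_{\theta}\sum_{a}\pi_{\theta}(a|s) = 0$ implies $\mathbf{E}_{a\sim\pi_{\theta}(\cdot|s)}[V^{\pi_{\theta}}(s)\nabla_{\theta}\log\pi_{\theta}(a|s)] = 0$ for every $s$. Subtracting this zero from the expression above and using $A^{\pi_{\theta}}(s,a) = Q^{\pi_{\theta}}(s,a) - V^{\pi_{\theta}}(s)$ delivers the claimed form.

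The main obstacle I anticipate is not any single algebraic step but the justification that differentiation in $\theta$ commutes with the infinite sum/expectation defining $V^{\pi_{\theta}}$ and with the limit defining $d^{\pi_{\theta}}$; in the finite $\mathcal{S},\mathcal{A}$ setting assumed here this is standard once one invokes ergodicity (so $d^{\pi_{\theta}}$ is the unique left eigenvector of the transition matrix with smooth dependence on $\theta$ whenever $\pi_{\theta}$ is smooth) and the uniform boundedness of $r$ on $[0,r_{\max}]$, which makes $Q^{\pi_{\theta}}$ and $V^{\pi_{\theta}}$ uniformly bounded and dominated convergence applicable. Given those regularity facts, the three steps above form a self-contained proof and mirror the derivation used in \citet{bai2023regret}.
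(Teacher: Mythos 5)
Your derivation is correct and is the classical proof of the average-reward policy gradient theorem: differentiate the differential Bellman equation, average against the stationary distribution so that the $\nabla_{\theta}V^{\pi_{\theta}}$ terms cancel via the invariance $\sum_{s} d^{\pi_{\theta}}(s)\sum_{a}\pi_{\theta}(a|s)\mathbb{P}(s'|s,a)=d^{\pi_{\theta}}(s')$, and then pass from $Q^{\pi_{\theta}}$ to $A^{\pi_{\theta}}$ using the zero-mean baseline identity. The paper itself gives no proof of this lemma --- it simply cites \citet{sutton1999policy} --- so there is nothing to compare against beyond noting that your argument is exactly the standard one from that reference, with the regularity caveats (differentiability of $d^{\pi_{\theta}}$ and $V^{\pi_{\theta}}$ under ergodicity and finite $\mathcal{S},\mathcal{A}$) appropriately flagged.
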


PG algorithms maximize the average reward by updating $\theta$ using a gradient ascent step with stepsize $\alpha$\footnote{With slight abuse of notation, we use $t$ as an index for sample number and gradient update to align with prior work in mixing time \citep{suttle2023beyond, dorfman2022}.},

\begin{align}
    \theta_{t+1}= \theta_t + \alpha_t \nabla_{\theta} J(\pi_{\theta_t}),
\end{align}

Normally, average reward policy gradient algorithms estimate the advantage function with a simple average of all reward values observed during a trajectory. In \cite{bai2023regret} they propose an advantage estimation algorithm that estimates $Q$ and $V$ values from the trajectories sampled by splitting them into sub-trajectories. However, they propose dividing the trajectory into sub-trajectories, where the length of the length of trajectory and sub-trajectories are functions of mixing time. Thus, we aim to provide a global convergence guarantee of  an algorithm that has no requirement of knowing mixing time. This goal motivates us to select MAC which has no such requirement but lacks global convergence analysis. We give an high-level overview of the algorithm from \cite{bai2023regret} in the following subsection. 

\subsection{Parameterized Policy Gradient with Advantage Estimation}\label{section:ppgae}
Recently, \cite{bai2023regret} proposed a PG algorithm, Parameterized Policy Gradient with Advantage Estimation (PPGAE), and derived its global convergence gaurantee in the average reward setting for general policy parameterization. PPGAE defines $K$ as the number of epochs, $H$ as the length of one epoch, and $T$ as the sample budget of the entire training process. Therefore, we can express them in terms of each with $K = T/H$. To formulate the policy gradient update at the end of the epoch, the advantage value of each sample collected in an epoch is then estimated based on the reward values observed in the samples. A more detailed description of the PPGAE algorithm can be found in the Appendix. In PPGAE, $H = 16 \tau_{hit} \tau_{mix} \sqrt{T} (\log T)^2$, where $\tau_{hit}$, \textit{hitting time} is defined as below:	%
\begin{equation}\label{hitting time}
\tau_{hit} := \max_{\theta} \max_{s \in \mathcal{S}} \frac{1}{d^{\pi_{\theta}}(s)}.
\end{equation}
Intuitively, it is the amount of time to reach all states in the state space. If the induced Markov Chain is ergodic, then $\tau_{hit}$ is finite because each state in $\mathcal{S}$ has a non-zero chance of being reached. Similar to mixing time, hitting time is also defined by the stationary distribution of a given policy. Thus, its estimation also suffers from the same difficulties as mixing time estimation.


The algorithm relies on knowing the mixing time and hitting time to calculate $H$, restricting its use case to simple environments with small state spaces where they can be feasibly be estimated. This requirement becomes more impractical as the state space or environment complexity increases \citep{hsu2015, wolfermixing2020}. Furthermore, even if mixing time and hitting time are known, by definition of epoch length, $H$, the minimum sample budget, $T$, required for the number of episodes, $K$, to be at least one is practically infeasible as we will explain in Section \ref{section:global_convergence}. In this work, we utilize a variant of the actor-critic (AC) algorithm that is able to estimate the advantage with a trajectory length scheme that has no dependence on mixing time, hitting time, and total sample budget as we will explain in more detail in the following sections.


%


\subsection{Actor-Critic Algorithm}\label{section:ac}

The AC algorithm alternates between a actor and a critic. The actor function is the parameterized policy $\pi_{\theta}$ and the critic function estimates the value function $V^{\pi_{\theta}}(s)$. We can rewrite the policy gradient theorem in terms of the \textit{temporal difference} (TD), $\delta^{\pi_{\theta}}$

    \begin{align}
        &\nabla_{\theta} J(\theta)=\mathbf{E}_{s\sim d^{\pi_{\theta}},a\sim\pi_{\theta}(\cdot|s), s'\sim p(\cdot | s, a)}\bigg[ \delta^{\pi_{\theta}}\nabla_{\theta}\log\pi_{\theta}(a|s)\bigg],
    \end{align}
where $\delta^{\pi_{\theta}} = r(s,a) - J(\theta) + V^{\pi_{\theta}}(s') - V^{\pi_{\theta}}(s)$. We can see that the TD is an estimation of the advantage term. Actor-critic algorithms provide a greater stability that alternative PG algoirhtms, such as REINFORCE \citep{williams1992simple} and PPGAE that estimate the advantage with a sum of observed rewards. The stability comes from the learned value estimator, the critic function, being used as a baseline to reduce variance in the gradient estimation.

Because the scope of this work focuses on the global convergence for the actor, we assume that the critic function is the inner product between a given feature map $\phi(s): \mathcal{S}\to \mathbb R^m$ and a weight vector $\omega \in \mathbb R^m$. This assumption allows the critic optimization problem we describe below to be strongly convex.

 We denote the critic estimation for $V^{\pi_\theta}(s)$ as $V_{\omega}(s) \coloneqq \langle\phi(s),\omega \rangle$ and assume that $\lVert \phi(s) \rVert \leq 1$ for all $s \in \mathcal{S}$.  The critic aims to minimize the error below,
\begin{align}\label{value_function_problem}
    \min_{\omega\in\Omega}  \sum_{s \in S} d^{\pi_{\theta}}(s) (V^{\pi_{\theta}}(s) - V_{\omega}(s))^2.
\end{align}
%
By weighting the summation by $d^{\pi_{\theta}}(s)$, it is more imperative to find an $\omega$ that accurately estimates of $V$ value at states where the agent has a higher probability of being in the long-run.
The gradient update for $\omega$ is given as 
\begin{align}\label{Critic_update_00}
    \omega_{t+1}=& \Pi_{\Omega}\big[\omega_t-\beta_t \big(r(s_t,a_t)-J(\pi_{\theta_t})+ \langle\phi(s_{t+1}),\omega_t\rangle
    \nonumber
    \\
    &\hspace{25mm}-\langle\phi(s_t),\omega_t\rangle\big)\phi(s_t)\big],
\end{align}
where $\beta_t$ is the critic learning rate. Because the critic update in \eqref{Critic_update_00} relies on $J(\pi_{\theta_t})$, which we do not have access to, we can substitute with a recursive estimate for the average reward as $\eta_{t+1}= \eta_t-\gamma_t (\eta_t-r(s_t,a_t))$. We now write the AC updates as
\begin{align}\label{Critic_update_0}
	    \eta_{t+1}=& \eta_t-\gamma_t \cdot f_t, && \text{(reward tracking)}
	    \nonumber
	    \\
    \omega_{t+1}=& \Pi_{\Omega}\big[\omega_t-\beta_t\cdot g_t\big],&& \text{(critic update)}
    \nonumber\\
        \theta_{t+1}=& \theta_t + \alpha_t \cdot h_t,&& \text{(actor update)}
\end{align}
where we have 
\begin{align}\label{stochastic_gradients}
 f_t=& \eta_t-r(s_t,a_t),
 \nonumber
 \\
    g_t=&\big(r(s_t,a_t)-\eta_t+ \langle\phi(s_{t+1})-\phi(s_{t}),\omega_t\rangle\big)\phi(s_t),
    \nonumber
    \\
    h_t=& \delta^{\pi_{\theta_t}}\cdot \nabla_\theta \log \pi_{\theta_t} (a_t|s_t),
    \nonumber
    \\
    \delta^{\pi_{\theta_t}}=& r(s_t,a_t)-\eta_t+ \langle\phi(s_{t+1})-\phi(s_{t}),\omega_t\rangle.
\end{align}

As the critic and reward tracking are vital to the average reward AC, we incorporate the critic and average reward tracking errors in our global convergence analysis of the actor. One drawback of most vanilla AC algorithms is their assumption on the decay rates of mixing time. Under ergodicity, Markov Chains induced by $\pi_{\theta}$ reach their respective $d^{\pi_{\theta}}$ exponentially fast. That is for some $\rho \in \left[0,1\right], m(t;\theta) \leq \bo{\rho^t}$. However most vanilla AC analysis assumes there exists some $\rho$ such that for all $\theta, m(t;\theta) \leq \bo{\rho^t}$. This assumption sets an upper limit on how slow mixing an environment can be for the algorithm to handle. In the following section, we explain the AC variant, MAC, that will provide with the tighter dependence on mixing time despite no oracle knowledge of it and no limit on $\rho$. 

\subsection{Multi-level Actor-Critic}\label{section:mac}

Recent work has developed the Multi-Level Actor-Critic (MAC) \citep{suttle2023beyond} that relies upon a Multi-level Monte-Carlo (MLMC) gradient estimator for the actor, critic, and reward tracking. Let $J_t \sim \text{Geom(}{1}/{2}\text{)}$ and we collect the trajectory $\mathcal{T}_t:=\{s_t^i,a_t^i,r_t^i,s_t^{i+1}\}_{i=1}^{2^{J_t}}$ with policy $\pi_{\theta_t}$. Then the MLMC policy gradient estimator is given by the following conditional:

\begin{equation}\label{eq:MLMC_Gradient}
h_t^{MLMC} = h_t^0 +
\begin{cases}
2^{J_t}(h_t^{J_t} - h_t^{J_t - 1}),& \text{if } 2^{J_t}\leq T_{\max}\\
0,              & \text{otherwise}
\end{cases}             
\end{equation}
with $h_t^j = \frac{1}{2^j}\sum_{i=1}^{2^j}h (\theta_t;s_t^i,a_t^i)$ and where $T_{\max} \geq 2$. We note that the same formula is used for MLMC gradient estimators of the critic, $g_t^{MLMC}$, and reward tracker, $f_t^{MLMC}$.

As we will see from Lemma \ref{lemma:31ss}, the advantage of the MLMC estimator is that we get the same bias as averaging $T$ gradients with $\wo{1}$ samples.
Drawing from a geometric distribution has no dependence on knowing what the mixing time is, thus allowing us to drop the oracle knowledge assumption previously used in works such as \cite{bai2023regret}.
To reduce the variance introduced by the MLMC estimator \citep{dorfman2022, suttle2023beyond} utilized the adpative stepsize scheme, Adagrad \citep{duchi2011, levy2017online}.

\section{Global Convergence Analysis} 
In this section we provide theoretical guarantee of the global convergence of MAC from \cite{suttle2023beyond}. 

\label{section:global_convergence}
\subsection{Preliminaries} \label{section:prelim}
In this section, we provide assumptions and lemmas that will support our global convergence analysis in the next section.
\begin{assumption}
    \label{assumption:erogidcity}
    For all $\theta$, the parameterized MDP $\mathcal{M_{\theta}}$ induces an ergodic Markov Chain.
\end{assumption}

Assumption \ref{assumption:erogidcity} is typical in many works such as \cite{suttle2023beyond, pesquerel2022imed,gong2020duality, bai2023regret}. As previously mentions, it ensures all states are reachable, and also importantly, guarantees a unique stationary distribution $d^{\pi_{\theta}}$ of any induced Markov Chain. 

Because we parameterize the critic as a linear function approximator, for a fixed policy parameter $\theta$, the temporal difference will converge to the minimum of the mean squared projected Bellman error (MSPBE) as discussed in \cite{sutton2018reinforcement}.  
\begin{definition}\label{assum:mspbe}
Denoting $\omega^*(\theta)$ as the fixed point for a given $\theta$, and for a given feature mapping $\phi$ for the critic, we define the worst-case approximation error to be
\begin{align}
    \mathcal{E}^{critic}_{app} = \sup_{\theta} \sqrt{ \mathbf{E}_{s \sim \mu_{\theta}} \left[ \phi(s)^T \omega^*(\theta) - V^{\pi_{\theta}}(s) \right]^2},
\end{align}
\end{definition}

which we assume to be finite. With a well-designed feature map, $\mathcal{E}^{critic}_{app}$ will be small or even 0. We will later see that by assuming $\mathcal{E}^{critic}_{app} = 0$ we recover the $\wo{T^{-\frac{1}{4}}}$ dependence as in \cite{bai2023regret}. 




\begin{assumption} \label{assum:policy_conditions}
    Let $\{ \pi_{\theta} \}_{\theta \in \mathbb{R}^d}$ denote our parameterized policy class. There exist $B, R, L > 0$ such that
    \begin{enumerate}
        \setlength\itemsep{0em}
        \item $\norm{ \nabla \log \pi_{\theta}(a | s) } \leq B$, for all $\theta \in \mathbb{R}^d$,
        \item $\norm{ \nabla \log \pi_{\theta}(a | s) - \nabla \log \pi_{\theta'}(a | s) } \leq R \norm{ \theta - \theta' }$, for all $\theta, \theta' \in \mathbb{R}^d$,
        \item $| \pi_{\theta}(a | s) - \pi_{\theta'}(a | s) | \leq L \norm{\theta - \theta'}$, for all $\theta, \theta' \in \mathbb{R}^d$.
    \end{enumerate}
\end{assumption}

Assumption \ref{assum:policy_conditions} establishes regularization conditions for the policy gradient ascent and has been utilized in prior work such as \cite{suttle2023beyond, papini2018stochastic, kumar2019sample, zhang2020global, xu2020improved, bai2023regret}. This assumption will be vital for presenting our modified general framework for non-constant stepsize in Lemma \ref{lem_framework} as $B, R, L$ will appear in our bound for the difference between optimal reward and the average cumulative reward.

\begin{assumption}
    \label{assump_transfer_error}
    Define the transferred policy function approximation error
    \begin{align}
        \label{eq:transfer_error}
        \begin{split}
            L_{d_\rho^{\pi^*},\pi^*}(h^*_\theta,\theta ) =\mathbf{E}_{s\sim d_\rho^{\pi^*}}\mathbf{E}_{a\sim\pi^*(\cdot\vert s)}\bigg[
            \bigg(\nabla_\theta\log\pi_{\theta}(a\vert s)\cdot h^*_{\theta}-A^{\pi_\theta}(s,a)\bigg)^2\bigg],
	\end{split}
    \end{align}
    where $\pi^*$ is the optimal policy and $h^*_{\theta}$ is given as
    \begin{align}
        \label{eq:NPG_direction}
	\begin{split}
            h^*_{\theta}=\arg\min_{h\in\mathbb{R}^{\mathrm{d}}}~\mathbf{E}_{s\sim d_\rho^{\pi_{\theta}}}\mathbf{E}_{a\sim\pi_{\theta}(\cdot\vert s)}\bigg[
            \bigg(\nabla_\theta\log\pi_{\theta}(a\vert s)\cdot h-A^{\pi_{\theta}}(s,a)\bigg)^2\bigg].
	\end{split}
    \end{align}
    We assume that the error satisfies $L_{d_{\rho}^{\pi^*},\pi^*}(h^*_{\theta},\theta)\leq \mathcal{E}^{actor}_{app}$ for any $\theta\in\Theta$ where $\mathcal{E}^{actor}_{app}$ is a positive constant.
\end{assumption}

Assumption \ref{assump_transfer_error} bounds the error that arises from the policy class parameterization. For neural networks, $\mathcal{E}^{actor}_{app}$ has shown to be small \citep{wang2019neural}, while for softmax policies, $\mathcal{E}^{actor}_{app} = 0$ \citep{agarwal2021theory}. This approximation assumption has also been used in \cite{bai2023regret}, and is important to generalizing the policy parameterization in the convergence analysis  we will later see in Section \ref{section:global_convergence} as $\mathcal{E}^{actor}_{app}$ will appear as its own independent term in the final bound.

For our later analysis, we also define 
\small
    \begin{align}
            F(\theta)=
            \mathbf{E}_{s\sim d^{\pi_{\theta}}}\mathbf{E}_{a\sim\pi_{\theta}(\cdot\vert s)}\left[\nabla_{\theta}\log\pi_{\theta}(a|s)(\nabla_{\theta}\log\pi_{\theta}(a|s))^T \right],\nonumber
    \end{align}
\normalsize
as the Fisher information matrix. We can now also express $h^*_\theta$ defined in \eqref{eq:NPG_direction} as,
    \begin{align*}
        h^*_{\theta} = F(\theta)^{\dagger} \mathbf{E}_{s\sim d^{\pi_{\theta}}}\mathbf{E}_{a\sim\pi_{\theta}(\cdot\vert s)}\left[\nabla_{\theta}\log\pi_{\theta}(a|s)A^{\pi_{\theta}}(s, a)\right],
    \end{align*}
where $\dagger$ denotes the Moore-Penrose pseudoinverse operation.

\begin{assumption}
    \label{assump_4}
    Setting $I_{F}$ as the identity matrix of same dimensionality as $F(\theta)$, let there exists some positive constant $\mu_F$ such that $F(\theta)-\mu_F I_{F}$ is positive semidefinite.
\end{assumption}
Assumption \ref{assump_4} is a common assumption for global convergence of policy gradient algorithms \cite{liu2020improved, bai2023regret}. This assumption will be useful later in our analysis by translating the general framework proposed in Lemma \ref{lem_framework} into terms of $\left[ \bignorm{ \nabla J(\theta_t) }^2 \right]$, the local convergence rate of MAC. We then can use the convergence rate bound established by \cite{suttle2023beyond}, which we state in Lemma \ref{lemma:convergence_rate}.

\begin{lemma} \label{lemma:31ss}
Let $j_{max} = \floor{\log T_{max}}$. Fix $\theta_t$ measurable w.r.t. $\mathcal{F}_{t-1}$. Assume $T_{max} \geq \tau_{mix}^{\theta_t}$, $\norm{\nabla J(\theta)} \leq G_H$, for all $\theta$, and $\norm{ h_t^N } \leq G_H$, for all $N \in \left[ T_{max} \right]$. Then
\small
\begin{align}
    \mathbf{E}_{t-1} \left[ h_t^{MLMC} \right] &= \mathbf{E}_{t-1} \left[ h_t^{j_{max}} \right], \label{eqn:31ss_mean} \\  
    \mathbf{E} \left[ \norm{ h_t^{MLMC} }^2 \right] &\leq \wo{ G_H^2 \tau_{mix}^{\theta_t} \log T_{max} } + 8 \log(T_{max}) T_{max} \left( \mathcal{E}(t) + 16B^2 (\mathcal{E}^{critic}_{app})^2 \right)  \label{eqn:31ss_2nd_moment} \\ 
    \mathbf{E} \norm{\nabla J(\theta_t)-h&_t^{j_{max}}}^2  \leq \wo{ G_H^2 \tau_{mix}^{\theta_t} \frac{\log T_{max}}{T_{max}} } +  \mathcal{E}_2(t) +  16B^2 (\mathcal{E}^{critic}_{app})^2. \label{eqn:est_err_variance}
\end{align}
\normalsize
\end{lemma}

Lemma \ref{lemma:31ss} provides a bound for the variance of the MLMC gradient estimator and will be integral to our analysis.


\begin{lemma} \label{thm:critic_analysis_main_body}
Let $\beta_t = \gamma_t = (1 + t)^{-\nu}, \alpha = \alpha_t' / \sqrt{\sum_{k=1}^t \norm{h^{MLMC}_t}^2}$, and $\alpha_t' = (1 + t)^{-\sigma}$, where $0 < \nu < \sigma < 1$. Then
\small
\begin{align}
    \frac{1}{T} &\sum_{t=1}^T  \mathcal{E}(t) \leq \bo{T^{\nu - 1}} + \bo{T^{-2(\sigma - \nu)}} 
    + \wo{  \tau_{mix} \log T_{\max}} \bo{T^{-\nu}} + \wo{  \tau_{mix} \frac{\log T_{\max}}{T_{\max}} }. 
    %
    %
    \label{ineq:critic_1}
\end{align}
\normalsize

By setting $\nu = 0.5$ and $\sigma = 0.75$ leads to the following:
\small
\begin{align}
    \frac{1}{T} \sum_{t=1}^T  \mathcal{E}(t) \leq 
    &\wo{ \tau_{mix} \log T_{\max}} \bo{T^{-\frac{1}{2}}} \nonumber\\&+ \wo{ \tau_{mix}\frac{\log T_{\max}}{T_{\max}} }. \label{ineq:critic_2}
\end{align}
\normalsize

\end{lemma}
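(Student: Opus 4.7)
The plan is to track a combined Lyapunov function $\mathcal{E}(t) = \mathbb{E}[\|\omega_t - \omega^*(\theta_t)\|^2] + \mathbb{E}[(\eta_t - J(\pi_{\theta_t}))^2]$ and derive a one-step drift inequality that, once telescoped, yields the claimed time-averaged bound. The first step is to expand $\|\omega_{t+1} - \omega^*(\theta_{t+1})\|^2$ via the critic update $\omega_{t+1} = \Pi_{\Omega}[\omega_t - \beta_t g_t^{MLMC}]$, inserting $\pm\,\omega^*(\theta_t)$ to separate three contributions: a contractive term $(1 - c\beta_t)\|\omega_t - \omega^*(\theta_t)\|^2$ guaranteed by the strong convexity of the MSPBE (which follows from Assumption \ref{assumption:erogidcity} together with $\|\phi(s)\| \leq 1$); a bias term arising because $\mathbb{E}_{t-1}[g_t^{MLMC}]$ is not exactly the MSPBE gradient at $(\omega_t,\theta_t)$; and a variance term $\beta_t^2\,\mathbb{E}\|g_t^{MLMC}\|^2$. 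An analogous decomposition applies to the reward-tracking error $(\eta_t - J(\pi_{\theta_t}))^2$ using $\eta_{t+1} = \eta_t - \gamma_t f_t^{MLMC}$.

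Next, I would bound bias and variance using Lemma \ref{lemma:31ss}. The first identity in that lemma reduces the MLMC bias to that of the $j_{\max}$-level estimator, i.e., the bias of averaging $T_{\max}$ consecutive Markov samples, which by standard geometric-ergodicity arguments contributes $\wo{\tau_{mix}/T_{\max}}$; this is the source of the final term $\wo{\tau_{mix}\log T_{\max}/T_{\max}}$. The second claim controls the second moment by $\wo{\tau_{mix}\log T_{\max}}$ plus $T_{\max}\log(T_{\max})(\mathcal{E}(t) + (\mathcal{E}^{critic}_{app})^2)$; the first piece feeds into $\wo{\tau_{mix}\log T_{\max}}\bo{T^{-\nu}}$ after summing $\beta_t^2$-weighted variances and dividing by $\sum_t \beta_t$, while the second piece must be absorbed into the contractive $\beta_t \mathcal{E}(t)$ term. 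I would simultaneously account for the moving-target shift $\omega^*(\theta_{t+1}) - \omega^*(\theta_t)$, which by implicit differentiation of the MSPBE fixed-point equation and Assumption \ref{assum:policy_conditions} is Lipschitz in $\theta_{t+1} - \theta_t$ and therefore bounded by $\alpha_t \|h_t^{MLMC}\|$ up to a constant.

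Finally, I would sum the resulting one-step inequality from $t=1$ to $T$ and divide by $\sum_t \beta_t$. The telescoped initial Lyapunov yields $\bo{T^{\nu-1}}$; the aggregated variance yields $\wo{\tau_{mix}\log T_{\max}}\bo{T^{-\nu}}$; the residual MLMC bias yields $\wo{\tau_{mix}\log T_{\max}/T_{\max}}$; and the moving-target sum $\sum_t \alpha_t^2\|h_t^{MLMC}\|^2$, collapsed via the AdaGrad telescoping identity $\sum_{t \leq T} \alpha_t^2\|h_t^{MLMC}\|^2 \leq \bo{(\alpha_T')^2 \log T}$, contributes $\bo{T^{-2(\sigma-\nu)}}$ after normalization by $\sum_t \gamma_t$. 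Plugging in $\nu = 1/2$, $\sigma = 3/4$ balances the first three terms at $T^{-1/2}$ and yields the second statement.

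The main obstacle will be the coupling between the actor and critic analyses: the Lemma \ref{lemma:31ss} variance bound contains $\mathcal{E}(t)$ itself on the right-hand side, so the drift inequality is implicit in $\mathcal{E}(t)$, and the prefactor $T_{\max}\log T_{\max}$ must be carefully dominated by $\beta_t$'s contraction to prevent blow-up. Relatedly, the AdaGrad-normalized actor step makes the moving-target contribution data-dependent, and controlling it requires Cauchy--Schwarz together with the telescoping identity above to produce a clean $T^{\nu - 2\sigma}$ per-step contribution. Threading these two couplings is the most delicate bookkeeping of the proof; once they are handled, the remaining terms arise from standard stochastic-approximation summations against $\gamma_t = (1+t)^{-\nu}$.
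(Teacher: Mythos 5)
Your proposal reconstructs the full drift/Lyapunov analysis of the critic and reward-tracker, which is more than the paper itself does for this statement: the paper imports the lemma from \cite{suttle2023beyond} and only supplies (in its appendix) a correction to the underlying average-reward-tracking theorem, since that is where the final term $\wo{\tau_{mix}\log T_{\max}/T_{\max}}$ comes from. Your five ingredients --- telescoped initial error giving $\bo{T^{\nu-1}}$, stepsize-weighted variance giving $\wo{\tau_{mix}\log T_{\max}}\bo{T^{-\nu}}$, moving-target drift giving $\bo{T^{-2(\sigma-\nu)}}$ via $\alpha_t\leq\alpha_t'$, and a residual MLMC bias term --- match the architecture of that analysis, and the exponents all line up.

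The one place you should be careful is exactly the step you dispatch in a single clause: obtaining the \emph{linear} term $\wo{\tau_{mix}\log T_{\max}/T_{\max}}$ from the bias cross term $\mathbf{E}[(\eta_t-\eta_t^*)(F'(\eta_t)-f_t^{j_{\max}})]$. The quoted tool (Lemma A.6 of \cite{dorfman2022}) bounds only the first and second moments of the deviation $\|f_t^N-\nabla F(\eta_t)\|$; pairing the bounded error $|\eta_t-\eta_t^*|\leq\sqrt{2}\,r_{\max}$ with the first-moment bound yields $\wo{\sqrt{\tau_{mix}\log T_{\max}/T_{\max}}}$, which is precisely the mistake in the original proof that this paper's appendix is devoted to fixing. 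The paper's fix is to apply Cauchy--Schwarz across the time sum, invoke the \emph{second}-moment bound of Lemma A.6, and then absorb the resulting $\sqrt{Z(T)}$ factor through the quadratic inequality $Z\leq A+2\sqrt{Z}\sqrt{F}+2\sqrt{Z}\sqrt{G}\Rightarrow Z\leq 2A+16F+16G$. Your alternative --- a direct conditional-bias bound of order $\tau_{mix}/T_{\max}$ for an average of $T_{\max}$ Markov samples via geometric ergodicity --- is legitimate and arguably cleaner (it avoids the quadratic-inequality absorption entirely), but it does not follow from the lemmas you cite and must be proved separately; Jensen applied to the available first-moment bound only recovers the square root. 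You correctly flag the self-referential coupling through $\mathcal{E}(t)$ in the second-moment bound and the need to absorb the $T_{\max}\log T_{\max}$ prefactor into the contraction; that difficulty is inherited from the source analysis and is not resolved there either, so your treatment is on par with the paper's on that point.
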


Lemma \ref{thm:critic_analysis_main_body} established by \cite{suttle2023beyond} states the convergence rate of the critic with an MLMC estimator. This result directly affects the overall MAC convergence rate from \cite{suttle2023beyond} stated below,

\begin{lemma}(MAC Convergence Rate) \label{lemma:convergence_rate}
    Assume $J(\theta)$ is $L$-smooth, $\sup_{\theta} | J(\theta) | \leq M$, and $\norm{ \nabla J(\theta) }, \norm{ h_t^{MLMC} } \leq G_H$, for all $\theta, t$ and under assumptions of Lemma \ref{thm:critic_analysis_main_body}, we have
    \begin{align}\label{final_mac_bound}
    \frac{1}{T} \sum_{t=1}^T \mathbb{E}
     \left[ \bignorm{ \nabla J(\theta_t) }^2 \right]\leq \bo{ \mathcal{E}^{critic}_{app} }+\wo{ \frac{\tau_{mix} \log T_{\max}}{\sqrt{T}}}
 +\wo{ { \frac{\tau_{mix}\log T_{\max}}{T_{\max}}} } .
    \end{align}

\end{lemma}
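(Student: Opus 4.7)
The plan is to run the standard $L$-smooth descent-lemma analysis for gradient ascent, but with two twists: (i) the ascent direction $h_t^{MLMC}$ is a biased stochastic approximation of $\nabla J(\theta_t)$, whose bias and second moment are controlled by Lemma \ref{lemma:31ss}; and (ii) the step size $\alpha_t = \alpha_t'/\sqrt{\sum_{k=1}^t \|h_k^{MLMC}\|^2}$ is AdaGrad-type, so telescoping must go through a $\sqrt{\cdot}$ potential rather than a constant-stepsize sum. The target $\tfrac{1}{T}\sum_t \mathbb{E}\|\nabla J(\theta_t)\|^2$ will emerge after summing the smoothness inequality and absorbing the critic-driven component of the bias via Lemma \ref{thm:critic_analysis_main_body}.

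\textbf{Main steps.} First I would write the descent inequality
\begin{equation*}
J(\theta_{t+1}) \geq J(\theta_t) + \alpha_t \langle \nabla J(\theta_t), h_t^{MLMC} \rangle - \tfrac{L}{2}\alpha_t^2 \|h_t^{MLMC}\|^2,
\end{equation*}
split the inner product as $\|\nabla J(\theta_t)\|^2 + \langle \nabla J(\theta_t), h_t^{MLMC} - \nabla J(\theta_t)\rangle$, and apply Young's inequality so the bias term costs at most $\tfrac{1}{2}\|\nabla J(\theta_t)\|^2$ plus a squared-bias penalty. By Lemma \ref{lemma:31ss}, the conditional mean reduces to $\mathbb{E}_{t-1}[h_t^{j_{\max}}]$, whose gap to the true $\nabla J(\theta_t)$ decomposes into a mixing-induced piece of order $\tau_{mix}/T_{\max}$ and a TD/critic plug-in piece controlled by $\sqrt{\mathcal{E}(t)}+\mathcal{E}^{critic}_{app}$. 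Second, I would use the other half of Lemma \ref{lemma:31ss} to bound $\mathbb{E}\|h_t^{MLMC}\|^2 \lesssim G_H^2\tau_{mix}\log T_{\max} + T_{\max}\log(T_{\max})(\mathcal{E}(t) + (\mathcal{E}^{critic}_{app})^2)$, and invoke the standard AdaGrad telescoping identity $\sum_{t=1}^T \alpha_t \|h_t^{MLMC}\|^2 \leq 2\alpha_1'\sqrt{\sum_{t=1}^T \|h_t^{MLMC}\|^2}$, followed by Jensen on $\sqrt{\cdot}$ to convert the second-moment sum into a $\sqrt{T}$ rate.

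\textbf{Combining.} Summing the rearranged descent inequality over $t=1,\dots,T$, using $\sup_\theta |J(\theta)|\leq M$ to bound the telescoped $J$-difference, and dividing by $T$ yields an inequality of the form $\tfrac{1}{T}\sum_t \mathbb{E}\|\nabla J(\theta_t)\|^2 \lesssim \tfrac{M}{T\alpha_T} + \tfrac{L\alpha_1'}{\sqrt{T}}\sqrt{\tfrac{1}{T}\sum_t \mathbb{E}\|h_t^{MLMC}\|^2} + \tfrac{1}{T}\sum_t \mathbb{E}[\mathcal{E}(t)] + (\mathcal{E}^{critic}_{app})^2$. Substituting the second-moment bound and then plugging in the critic rate from Lemma \ref{thm:critic_analysis_main_body} collapses the $\tfrac{1}{T}\sum_t \mathcal{E}(t)$ contribution into $\widetilde{\mathcal{O}}(\tau_{mix}\log T_{\max}/\sqrt{T}) + \widetilde{\mathcal{O}}(\tau_{mix}\log T_{\max}/T_{\max})$, matching the claimed rate with an additive $\mathcal{O}(\mathcal{E}^{critic}_{app})$ floor.

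\textbf{Main obstacle.} The delicate part is handling the adaptive step size in the presence of biased Markovian gradients: one cannot pull $\alpha_t$ out of the expectation, so the clean AdaGrad telescoping must be combined with a conditional-expectation argument (and possibly a peeling bound on the partial sum $\sum_k \|h_k^{MLMC}\|^2$) to prevent the bias from coupling multiplicatively with the step size. A second subtle point is that the second-moment bound of Lemma \ref{lemma:31ss} contains an iterate-dependent $\mathcal{E}(t)$, so the actor and critic recursions must be chained together with care to ensure the two error channels do not amplify each other through the $\sqrt{\cdot}$ in the stepsize; this is what forces the logarithmic and $\tau_{mix}$ factors into the final rate rather than yielding a cleaner $1/\sqrt{T}$.
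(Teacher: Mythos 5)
Your outline reconstructs the actor-side argument that this lemma ultimately rests on---the $L$-smooth descent inequality, splitting $\langle \nabla J(\theta_t), h_t^{MLMC}\rangle$ into signal plus bias, controlling bias and second moment via Lemma \ref{lemma:31ss}, telescoping the AdaGrad stepsizes via Lemma \ref{lemma:42}, and plugging in Lemma \ref{thm:critic_analysis_main_body}. That is the strategy of \cite{suttle2023beyond}, from which this lemma is imported; the paper does not re-derive the actor recursion at all. Its entire proof content for this statement is the corrected average-reward-tracking analysis in the appendix (Theorem \ref{thm:reward_analysis_app}): as published, the reward-tracking error only supports a $\wo{\sqrt{\tau_{mix}\log T_{\max}/T_{\max}}}$ term, which would propagate through Lemma \ref{thm:critic_analysis_main_body} into \eqref{final_mac_bound} and make the stated bound unjustified. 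The fix re-bounds the cross term $I_2=\sum_{t}\E\big[(\eta_t-\eta_t^*)(F'(\eta_t)-f_t)\big]$ by Cauchy--Schwarz against $\big(\sum_t\E[(\eta_t-\eta_t^*)^2]\big)^{1/2}$, invokes the second-moment rather than first-moment estimate of Lemma \ref{lemma:a6}, and closes the resulting quadratic self-bounding inequality in $Z(T)=\sum_t\E[(\eta_t-\eta_t^*)^2]$, which removes the square root. By treating Lemma \ref{thm:critic_analysis_main_body} as a black box in its stated form, your proposal assumes away exactly the step the paper supplies, so it misses the substance of the paper's proof of this lemma.

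There is also a concrete step in your sketch that would fail as written: the term $M/(T\alpha_T)$ obtained by telescoping $J(\theta_{t+1})-J(\theta_t)$ and crudely lower-bounding the stepsizes by $\alpha_T$ is not $\bo{1/\sqrt{T}}$ here. Since $1/\alpha_T=(1+T)^{\sigma}\big(\sum_{k\le T}\Vert h_k^{MLMC}\Vert^2\big)^{1/2}\le G_H(1+T)^{\sigma}\sqrt{T}$, one gets $M/(T\alpha_T)=\bo{T^{\sigma-1/2}}=\bo{T^{1/4}}$ for the $\sigma=3/4$ required by Lemma \ref{thm:critic_analysis_main_body}, which diverges. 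The analysis in \cite{dorfman2022,suttle2023beyond} avoids this by keeping the random stepsize inside the expectation and exploiting the AdaGrad potential more carefully rather than dividing through by $T\alpha_T$; you correctly flag this coupling as the ``delicate part,'' but flagging it is not resolving it, and the resolution is what determines the powers of $T$ and $\tau_{mix}$ in the final rate.
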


Both Lemmas \ref{thm:critic_analysis_main_body} and \ref{lemma:convergence_rate} rely on the convergence of the error in average reward tracking, provided in Lemma D.1 of \cite{suttle2023beyond}. However, we have noticed that there is a $\wo{ \sqrt{ \frac{\tau_{mix}\log T_{\max}}{T_{\max}}} }$ term in Lemma D.1 that should actually absorb the $\wo{ { \frac{\tau_{mix}\log T_{\max}}{T_{\max}}} }$ term in \ref{ineq:critic_2} and \ref{final_mac_bound}. We provide a correct version of the proof of Lemma D.1 in the Appendix where we were able to remove the square root.

Finally, we will use the following result to manipulate the AdaGrad stepsizes in the final result of this section.
\begin{lemma}{Lemma 4.2, \citep{dorfman2022}.} \label{lemma:42}
For any non-negative real numbers $\{ a_i \}_{i \in [n]}$,
\begin{equation}
    \sum_{i=1}^n \frac{a_i}{ \sqrt{ \sum_{j=1}^i a_j } } \leq 2 \sqrt{ \sum_{i=1}^n a_i }.
\end{equation}
\end{lemma}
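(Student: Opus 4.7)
The plan is to prove this by a telescoping argument based on the partial sums. Let me write $S_i = \sum_{j=1}^i a_j$ with the convention $S_0 = 0$, so that $a_i = S_i - S_{i-1}$. Without loss of generality I may assume every $a_i > 0$ (indices with $a_i = 0$ contribute nothing to the left-hand side, under the convention $0/0 = 0$, and do not change the right-hand side).

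The key elementary inequality I would use is $\sqrt{x} - \sqrt{y} \geq \frac{x-y}{2\sqrt{x}}$ for all $x \geq y \geq 0$ with $x > 0$. This follows from the identity $(\sqrt{x}-\sqrt{y})(\sqrt{x}+\sqrt{y}) = x - y$ together with the bound $\sqrt{x}+\sqrt{y} \leq 2\sqrt{x}$. Applying this with $x = S_i$ and $y = S_{i-1}$ yields
\begin{equation}
    \frac{a_i}{\sqrt{S_i}} \;=\; \frac{S_i - S_{i-1}}{\sqrt{S_i}} \;\leq\; 2\bigl(\sqrt{S_i} - \sqrt{S_{i-1}}\bigr).
\end{equation}

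Summing this bound over $i = 1, \dots, n$ produces a telescoping sum on the right-hand side, which collapses to $2(\sqrt{S_n} - \sqrt{S_0}) = 2\sqrt{S_n} = 2\sqrt{\sum_{i=1}^n a_i}$, giving exactly the claimed inequality.

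There is no real obstacle here; the only subtle point is handling terms where $S_i = 0$, which forces all earlier $a_j$ to be zero as well, so those terms can be dropped from the sum without affecting either side. Once one reaches the first index $i^\star$ with $S_{i^\star} > 0$, the telescoping argument applies verbatim on $i \geq i^\star$, and the bound $2\sqrt{S_{i^\star}} - 2\sqrt{S_{i^\star - 1}} = 2\sqrt{S_{i^\star}}$ still fits into the telescoping form. Thus the complete proof is essentially a two-line computation once the right inequality $\sqrt{x}-\sqrt{y} \geq (x-y)/(2\sqrt{x})$ has been identified as the engine.
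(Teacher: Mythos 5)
Your proof is correct and is the standard telescoping argument for this classical AdaGrad-type inequality; the paper itself does not reprove the lemma but cites it from \citet{dorfman2022}, where essentially this same argument (via $\sqrt{x}-\sqrt{y}\geq (x-y)/(2\sqrt{x})$ and telescoping of the partial sums) is used. Your handling of the degenerate indices with $S_i=0$ is also fine.
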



\subsection{Global Convergence Guarantee}
To develop our convergence analysis, we present a modified version of the general framework proposed in \cite{bai2023regret} to accommodate non-constant stepsizes such as Adagrad. 

\begin{lemma}
    \label{lem_framework} 
    Suppose a general gradient ascent algorithm updates the policy parameter in the following way.
    \begin{equation}
	\theta_{t+1}=\theta_t+\alpha_t h_t.
    \end{equation}
    When Assumptions \ref{assum:policy_conditions}, \ref{assump_transfer_error}, and \ref{assump_4} hold, we have the following inequality for any $T$.
    \begin{equation}
        \label{eq:general_bound}
	\begin{split}
            J^{*}-\frac{1}{T}\sum_{t=1}^{T}J(\theta_t)\leq \sqrt{\mathcal{E}^{actor}_{app}}+\frac{B}{T}\sum_{t}^{T}\Vert(h_t-h^*_t)\Vert
            +\frac{R}{2T}\sum_{t=1}^{T}\alpha_t\Vert h_t\Vert^2+\frac{1}{T}\sum_{t=1}^{T}\frac{1}{\alpha_t}\mathbf{E}_{s\sim d^{\pi^*}}\zeta_t,	
        \end{split}
    \end{equation}
    where $h^*_t:=h^*_{\theta_t}$ and $h^*_{\theta_t}$ is defined in \eqref{eq:NPG_direction}, $J^*=J(\theta^*)$, and $\pi^*=\pi_{\theta^*}$ where $\theta^*$ is the optimal parameter, and $\zeta_t = [KL(\pi^*(\cdot\vert s)\Vert\pi_{\theta_k}(\cdot\vert s))-KL(\pi^*(\cdot\vert s)\Vert\pi_{\theta_{k+1}}(\cdot\vert s))]$.
\end{lemma}

We provide a proof of the above lemma in Appendix \ref{lem_framework_proof}. The proof is similar to that of \cite{bai2023regret} with a notable difference that the non-constant stepsize does not allow us to simplify the telescoping summation in the last term without bounding $\alpha_t$ to some constant which we will do later in the analysis.


\begin{theorem}
\small
    \label{theorem_1_statement}
    Let $\{\theta_t\}_{t=1}^{T}$ be defined as in Lemma \ref{lem_framework}. If assumptions \ref{assumption:erogidcity}, \ref{assum:policy_conditions}, \ref{assump_transfer_error},  \ref{assump_4} hold,  $J(\cdot)$ is $L$-smooth, then the following inequality holds.
    \begin{align}
        \label{thm_final_convergence}
        \begin{split}
        J^{*}-\frac{1}{T}\sum_{t=1}^{T}\mathbf{E}\left[J(\theta_t)\right]\leq  \sqrt{\mathcal{E}^{actor}_{app}}   + \wo{ \frac{\sqrt{\tau_{mix} T_{\max}} \log T_{\max}}{T^{\frac{1}{2}}}}  
         +\wo{ \frac{\sqrt{\tau_{mix} \log T_{\max}}}{{T^{\frac{1}{4}}}}} +\mathcal{E}^{critic}_{app}
        +\wo{  \sqrt{\frac{\tau_{mix}\log T_{\max}}{T_{\max}}}}.
        \end{split}
    \end{align}
\normalsize
\end{theorem}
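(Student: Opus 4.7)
The starting point is the decomposition in Lemma~\ref{lem_framework}. Taking expectations on both sides gives
\begin{equation*}
J^{*}-\frac{1}{T}\sum_{t=1}^{T}\mathbf{E}\left[J(\theta_t)\right]\leq \sqrt{\mathcal{E}^{actor}_{app}} + \underbrace{\tfrac{B}{T}\sum_t\mathbf{E}\|h_t-h_t^*\|}_{(\mathrm{I})} + \underbrace{\tfrac{R}{2T}\sum_t\mathbf{E}[\alpha_t\|h_t\|^2]}_{(\mathrm{II})} + \underbrace{\tfrac{1}{T}\sum_t\mathbf{E}\bigl[\tfrac{1}{\alpha_t}\zeta_t\bigr]}_{(\mathrm{III})}.
\end{equation*}
The plan is to control each of $(\mathrm{I})$, $(\mathrm{II})$, $(\mathrm{III})$ using the MLMC variance bound (Lemma~\ref{lemma:31ss}), the critic convergence rate (Lemma~\ref{thm:critic_analysis_main_body}), the MAC stationary-point rate (Lemma~\ref{lemma:convergence_rate}), the AdaGrad summation inequality (Lemma~\ref{lemma:42}), and the Fisher lower bound from Assumption~\ref{assump_4}, and then combine.

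For $(\mathrm{I})$, I would first apply Cauchy--Schwarz to obtain $(\mathrm{I})\leq B\sqrt{\tfrac{1}{T}\sum_t\mathbf{E}\|h_t-h_t^*\|^2}$, then split $h_t-h_t^* = (h_t-\mathbf{E}_{t-1}[h_t]) + (\mathbf{E}_{t-1}[h_t]-\nabla J(\theta_t)) + (\nabla J(\theta_t)-h_t^*)$. The first summand is the MLMC stochastic error, bounded via \eqref{eqn:31ss_2nd_moment}; the second is the MLMC bias, which by \eqref{eqn:31ss_mean} and exponential mixing decays like $1/T_{\max}$; the third is the vanilla-versus-natural-gradient gap, dominated by a multiple of $\|\nabla J(\theta_t)\|$ via the Fisher lower bound in Assumption~\ref{assump_4}. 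Averaging over $t$ and substituting Lemma~\ref{lemma:convergence_rate} for $\tfrac{1}{T}\sum\|\nabla J(\theta_t)\|^2$ together with Lemma~\ref{thm:critic_analysis_main_body} for $\tfrac{1}{T}\sum\mathcal{E}(t)$, the outer square root produces the dominant $\wo{\sqrt{\tau_{mix}T_{\max}}\log T_{\max}/T^{1/8}}$ and residual $\wo{\sqrt{\tau_{mix}\log T_{\max}/T_{\max}}}$ contributions.

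For $(\mathrm{II})$, the AdaGrad stepsize $\alpha_t=\alpha'_t/\sqrt{\sum_{k\leq t}\|h_k\|^2}$ with non-increasing $\alpha'_t$ lets me peel off $\alpha'_t\leq\alpha'_1$ and invoke Lemma~\ref{lemma:42}, giving $\sum_t\alpha_t\|h_t\|^2\leq 2\alpha'_1\sqrt{\sum_t\|h_t\|^2}$. Taking expectation and applying Jensen's inequality, the aggregate second moment $\sum_t\mathbf{E}\|h_t\|^2$ is bounded via Lemma~\ref{lemma:31ss} plus Lemma~\ref{thm:critic_analysis_main_body} by $\wo{T\tau_{mix}\log T_{\max}} + \bo{T\,T_{\max}\log T_{\max}(\mathcal{E}^{critic}_{app})^2}$, producing the $\bo{\sqrt{\log T_{\max}\,T_{\max}}\mathcal{E}^{critic}_{app}/\sqrt{T}}$ contribution in the theorem.

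For $(\mathrm{III})$, the difficulty is that $\alpha_t$ is both random and non-constant, so the clean telescoping in \cite{bai2023regret} is unavailable. Since $1/\alpha_t$ is monotonically non-decreasing in $t$ (the cumulative sum grows and $\alpha'_t$ is non-increasing), I apply Abel summation by parts,
\begin{equation*}
\sum_{t=1}^T \tfrac{1}{\alpha_t}\zeta_t = \tfrac{KL_1}{\alpha_1} + \sum_{t=2}^T KL_t\Bigl(\tfrac{1}{\alpha_t}-\tfrac{1}{\alpha_{t-1}}\Bigr) - \tfrac{KL_{T+1}}{\alpha_T} \leq \tfrac{\max_t KL_t}{\alpha_T},
\end{equation*}
with $KL_t:=KL(\pi^*(\cdot|s)\|\pi_{\theta_t}(\cdot|s))$ uniformly bounded via Assumption~\ref{assum:policy_conditions}(3). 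Jensen then converts $\mathbf{E}[1/\alpha_T]$ into $\sqrt{\sum_t\mathbf{E}\|h_t\|^2}/\alpha'_T$, which the same ingredients as in $(\mathrm{II})$ control. This is the step I expect to be the main obstacle, because the interplay between the random AdaGrad normalization and the non-telescoping KL terms must be handled without inflating the mixing-time factor or losing the vanishing rate. Assembling the three bounds with the stepsize choices $\nu=1/2$, $\sigma=3/4$ from Lemma~\ref{thm:critic_analysis_main_body} and collecting the dominant terms yields the theorem.
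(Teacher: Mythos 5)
Your proposal follows essentially the same architecture as the paper's proof: the decomposition from Lemma \ref{lem_framework}, Cauchy--Schwarz on the $\Vert h_t-h_t^*\Vert$ term combined with the Fisher bound of Assumption \ref{assump_4}, Lemma \ref{lemma:42} for the AdaGrad term, and the final substitution of Lemmas \ref{lemma:31ss}, \ref{thm:critic_analysis_main_body}, and \ref{lemma:convergence_rate}. Two local steps differ. For term $(\mathrm{I})$ the paper does not isolate the MLMC bias: it simply writes $\Vert h_t-\nabla J(\theta_t)\Vert^2\leq 2\Vert h_t\Vert^2+2\Vert\nabla J(\theta_t)\Vert^2$ and absorbs everything into the second moment of $h_t$ plus the gradient norm; your three-way split is finer but lands on the same dominant terms, so nothing is gained or lost. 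For term $(\mathrm{III})$ the paper pulls $1/\alpha_t\leq 1/\alpha_T$ out of the sum and then telescopes, leaving $\mathbf{E}[KL(\pi^*\Vert\pi_{\theta_1})]/(T\alpha_T)$; your Abel-summation argument is the more careful way to handle this (the $\zeta_t$ can be negative, so the paper's term-by-term inequality is not literally justified), but it replaces the single constant $KL(\pi^*\Vert\pi_{\theta_1})$ by $\max_t KL(\pi^*(\cdot\vert s)\Vert\pi_{\theta_t}(\cdot\vert s))$, and that uniform bound does \emph{not} follow from Assumption \ref{assum:policy_conditions}(3): Lipschitzness of $\theta\mapsto\pi_\theta(a\vert s)$ does not prevent $\pi_{\theta_t}(a\vert s)$ from approaching zero on the support of $\pi^*$, where the KL diverges. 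So either add an explicit assumption that the iterate policies are uniformly bounded below on the support of $\pi^*$ (or that the KL terms are uniformly bounded), or accept the paper's looser telescoping step. Apart from this one loose end, the assembly and the resulting rates, including the $T^{-1/8}$ term and the $\sqrt{\log(T_{\max})T_{\max}}\,\mathcal{E}^{critic}_{app}/\sqrt{T}$ term, match the paper's proof.
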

The proof of this above theorem can be found in Appendix \ref{proof_of_thm1}. Here, we provide a proof sketch to highlight the main mechanics.

\textit{Proof sketch.} We rewrite the bound of the expectation of \ref{eq:general_bound} of Lemma \ref{lem_framework} into terms of \ref{eqn:31ss_2nd_moment} of Lemma \ref{lemma:31ss} and \ref{final_mac_bound} of Lemma \ref{lemma:convergence_rate}. 

The expectation of the second term of the RHS of \ref{eq:general_bound} can be bounded as the following using Lemma \ref{lemma:31ss} and Assumption \ref{assump_4},
\small

\begin{equation}\label{eq:exp_second_term}
    \begin{split}
         \frac{1}{T}\sum_{t=1}^{T}\mathbf{E}\Vert h_t- h^*_t\Vert  \leq 
        \sqrt{\frac{1}{T}\sum_{t=1}^T \mathbf{E}\Vert h_t^{j_{max}} -\nabla J(\theta_t)\Vert^2} 
        +\sqrt{\frac{2}{T}\sum_{t=1}^{T}\left(2+\dfrac{1}{\mu_F^2}\right)\mathbf{E}\bigg[\Vert \nabla_\theta J(\theta_t)\Vert^2\bigg]}.
    \end{split}
\end{equation}

\normalsize

For the third term of the RHS of \ref{eq:general_bound}, we can utilize Lemma \ref{lemma:42}
\small
\begin{equation}\label{eq:exp_third_term}
    \begin{split}
        \frac{R}{2T}\sum_{t=1}^{T}\alpha_t\Vert h_t\Vert^2  \leq \frac{R}{T}\sqrt{\sum_{t=1}^{T}\Vert h_t\Vert^2}. 
    \end{split}
\end{equation}

\normalsize
We can also bound the fourth term using the fact that it is a telescoping sum and that $\alpha_T < \alpha_t$, $\alpha_T = \frac{\alpha'_T}{\sum_{t=1}^{T}\Vert h_t\Vert^2}$,

\small
\begin{equation}
    \label{eq:second_bound_with_adagrad}
    \begin{split}
        \frac{1}{T}\sum_{t=1}^{T}\frac{1}{\alpha_t}\mathbf{E}_{s\sim d^{\pi^*}}[\zeta_t]  \leq \frac{\mathbf{E}_{s\sim d^{\pi^*}}[KL(\pi^*(\cdot\vert s)\Vert\pi_{\theta_1}(\cdot\vert s))]}{\alpha'_T}\frac{1}{T}\sqrt{\sum_{t=1}^{T}\Vert h_t\Vert^2}.
    \end{split}
\end{equation}
\normalsize

\noindent We can now plug these bounds back into \ref{eq:general_bound} and ignore constants:. 

\small
\begin{equation}
    \label{eq:general_bound_expectation}
    \begin{split}
            J^{*}-\frac{1}{T}\sum_{t=1}^{T}\mathbf{E}\Vert J(\theta_t)\Vert
            \leq \sqrt{\mathcal{E}^{actor}_{app}} +\frac{1}{T}\sqrt{\sum_{t=1}^{T}\mathbf{E}\bigg[\Vert  h_t\Vert^2\bigg]}   + \sqrt{\frac{1}{T}\sum_{t=1}^T \mathbf{E}\Vert h_t^{j_{max}} -\nabla J(\theta_t)\Vert^2} +\sqrt{\frac{1}{T}\sum_{t=1}^{T}\mathbf{E}\bigg[\Vert \nabla_\theta J(\theta_t)\Vert^2\bigg]}. 
    \end{split}
\end{equation}
\normalsize

Bounding the the second and third term by the RHS with Lemmas \ref{lemma:31ss} and \ref{lemma:convergence_rate} respectively concludes the proof.

\textbf{Remark.} With Theorem \ref{theorem_1_statement}, we can recover the $\bo{T^{-\frac{1}{4}}}$ as in \cite{bai2023regret}. Furthermore, MAC has a tighter dependence on mixing time with $\wo{\sqrt{\tau_{mix}}}$ compared to the $\wo{\tau_{mix}^2}$ in \cite{bai2023regret} despite having no prior knowledge of mixing time due to the combination of MLMC gradient estimation and Adagrad stepsize. Similar to \cite{bai2023regret}, the independent $\mathcal{E}_{app}^{actor} \geq 0$ term accounts for the general policy parameterization. However, our bound has no dependence on hitting time like \cite{bai2023regret} as their dependence was a result of their advantage estimation algorithm described in Section \ref{section:ppgae}.

\textbf{Discussion on Practicality.} In this section, we want to highlight how practically feasible it is to implement MAC as compared to PPGAE. As mentioned previously, PPGAE defines $T$ as the sample budget of the entire training process, $K$ as the number of epochs, and $H = 16\tau_{hit}\tau_{mix}\sqrt{T}(\log(T))^2$ as the length of one epoch, thus $K = T/H$. Because $K$ represents the number of epochs, it must be a positive integer. For $K = T/H \leq 1$, that is equivalent to $\frac{\sqrt{T}}{(\log(T))^2} \leq 16 \tau_{hit}\tau_{mix}$. Even if $\tau_{hit} = 10$ and $\tau_{mix} = 1$, $H \approx 6.6 * 10^9$. Since $\tau_{hit}$ can become infinitely large and $\tau_{mix}$ grows with environment complexity, in practice $H$ would be much higher. As Figure \ref{fig:min_H} shows, if we set $\tau_{hit} = 10$, as mixing time increases, the minimum episode length, $H$, increases exponentially to satisfy $K > 1$. Even at $\tau_{mix} = 60$ the minimum $H$ is around $10^{14}$ samples.

\begin{figure}[h]
  \centering
  \includegraphics[width=0.6\columnwidth]{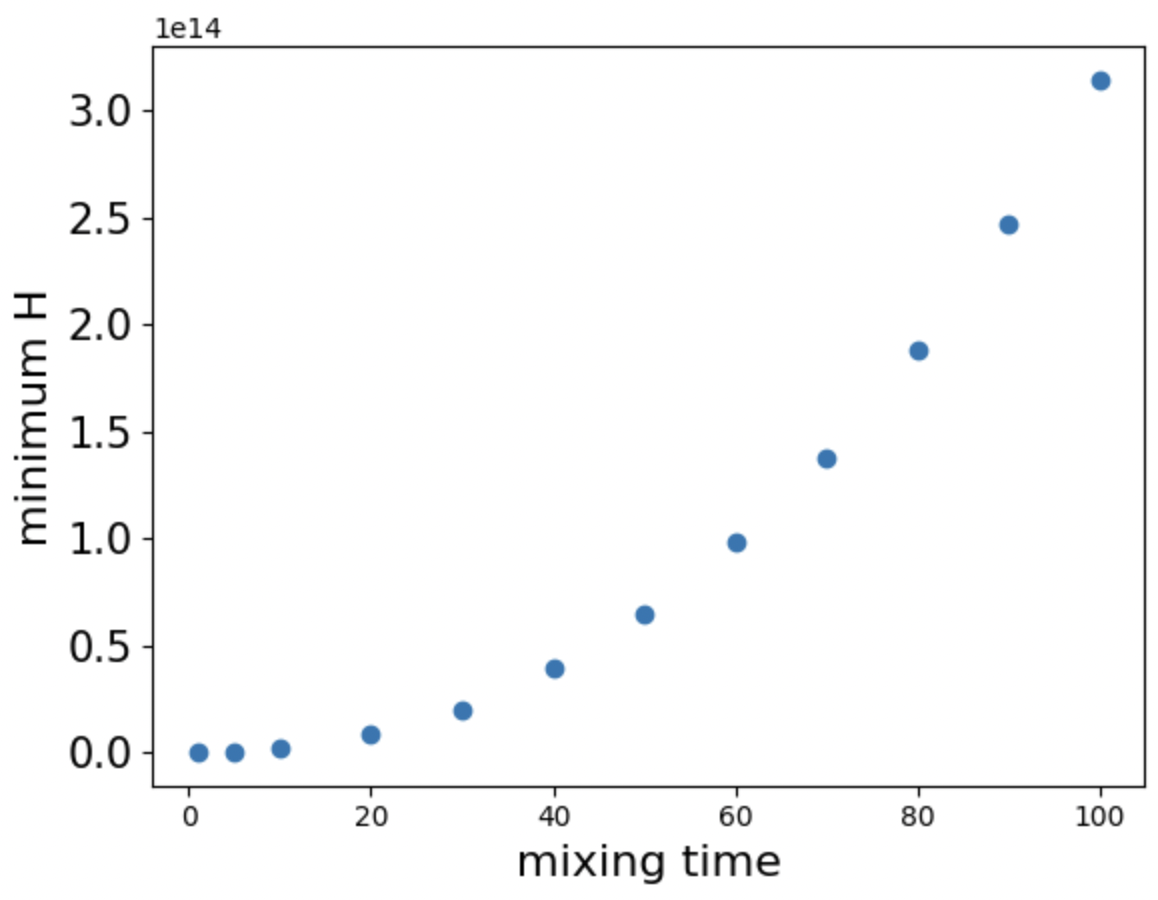}
  \caption{Minimum $H$ required for $K = 1$ given a mixing time $\tau_{mix}$. Both $H$ and $\tau_{\max}$ are in terms of number of samples. We set the hitting time to be 10 for this plot. }
  \label{fig:min_H}
\end{figure}

In contrast for MAC, the trajectory length is based on a geometric distribution with no dependence on mixing time, hitting time, or total sample budget. 


\subsection{Experimental Results}\label{section:experiments}

As a preliminary proof-of-concept experiment to show the advantage of MAC over PPGAE, Vanilla AC, and REINFORCE, we consider a $15$-by-$15$ sparse gridworld environment. The agent tries to from the top left to bottom right corner. The agent receives a reward of $+1$ if goal is reached and $+0$ else. The episode ends when the agent either reaches the goal or hits a limit of $200$ samples. We report a moving average success rate over $100$ trials with $95\%$ confidence intervals in Figure \ref{fig:experiments}. We can see that MAC has a higher success rate and can more consistently reach the goal than the baselines.

\begin{figure}[ht]
  \centering
  \includegraphics[width=0.6\columnwidth]{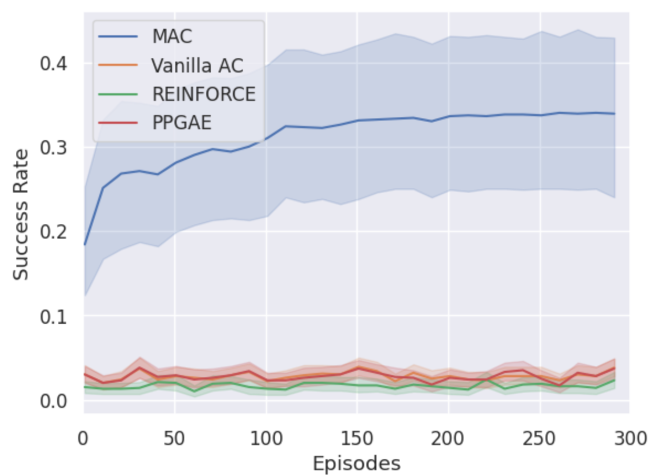}
  \caption{Success Rate in a sparse $15$-by-$15$ grid over 300 training episodes with $200$ samples per episode. For MAC, $T_{max} = 4$ and for PPGAE, $H = 200$ and $N = 1$. Vanilla AC and REINFORCE both have $H = 200$. and $100$ trials for each algorithm. PPGAE, Vanilla AC, and REINFORCE consistently converge to significantly less optimal solutions than MAC.}
  \label{fig:experiments}
\end{figure}






    

\section{Conclusion and Further Work}
In this work, we provide policy gradient global convergence analysis for the infinite horizon average reward MDP without restrictive and impractical assumptions on mixing time. Using MAC, we show that actor-critic models, utilizing a MLMC gradient estimator, achieves a tighter dependence on mixing time for global convergence. We hope this work encourages further investigation into algorithms that do not assume oracle knowledge of mixing time. Future work can also further test the advantages of MAC in slow mixing environments for robotics, finance, healthcare, and other applications.

\section*{Acknowledgements}
This research was supported by Army Cooperative Agreement W911NF2120076 and ARO Grant W911NF2310352  

\textbf{Disclaimer:} This paper was prepared for informational purposes in part by the Artificial Intelligence Research group of JPMorgan Chase \& Coand its affiliates (“JP Morgan”), and is not a product of the Research Department of JP Morgan. JP Morgan makes no representation and warranty whatsoever and disclaims all liability, for the completeness, accuracy or reliability of the information contained herein. This document is not intended as investment research or investment advice, or a recommendation, offer or solicitation for the purchase or sale of any security, financial instrument, financial product or service, or to be used in any way for evaluating the merits of participating in any transaction, and shall not constitute a solicitation under any jurisdiction or to any person, if such solicitation under such jurisdiction or to such person would be unlawful.





\bibliography{ref}
\bibliographystyle{icml2024}

\newpage
\appendix

\onecolumn

\tableofcontents
\newpage
\section*{Appendix}
\section{Proof of Lemma \ref{lem_framework}}
In this section, we provide a bound for the difference between the optimal reward and the cumulative reward observed up to trajectory $T$ that will be used as our general framework for the global convergence analysis. Our framework is a modification from \cite{bai2023regret} in that it can handle non-constant stepsizes. The framework provided in \cite{bai2023regret} is itself an average reward adaptation of the framework provided by \cite{liu2020improved} for the discounted reward setting. We first provide a supporting result in the form the average reward performance difference lemma:
\label{lem_framework_proof}
\begin{lemma}
    \label{lem_performance_diff}
    The difference in the performance for  any policies $\pi_\theta$ and $\pi_{\theta'}$is bounded as follows
    \begin{equation}
        J(\theta)-J(\theta')= \mathbb{E}_{s\sim d^{\pi_\theta}}\mathbb{E}_{a\sim\pi_\theta(\cdot\vert s)}\big[A^{\pi_{\theta'}}(s,a)\big]
    \end{equation}
\end{lemma}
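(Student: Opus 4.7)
The plan is to start from the right-hand side and expand the advantage, then use the average-reward Bellman equation together with the stationarity of $d^{\pi_\theta}$ to collapse the value-function terms. Specifically, using $A^{\pi_{\theta'}}(s,a) = Q^{\pi_{\theta'}}(s,a) - V^{\pi_{\theta'}}(s)$ and the average-reward Bellman identity $Q^{\pi_{\theta'}}(s,a) = r(s,a) - J(\theta') + \mathbb{E}_{s'\sim\mathbb{P}(\cdot|s,a)}[V^{\pi_{\theta'}}(s')]$ from \eqref{bellman}, the inner integrand becomes
\begin{equation*}
A^{\pi_{\theta'}}(s,a) = r(s,a) - J(\theta') + \mathbb{E}_{s'\sim \mathbb{P}(\cdot|s,a)}\bigl[V^{\pi_{\theta'}}(s')\bigr] - V^{\pi_{\theta'}}(s).
\end{equation*}

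Next I would take the outer expectation $\mathbf{E}_{s\sim d^{\pi_\theta}}\mathbf{E}_{a\sim\pi_\theta(\cdot|s)}[\cdot]$ of both sides. The reward term gives $\mathbf{E}_{s\sim d^{\pi_\theta}, a\sim \pi_\theta}[r(s,a)] = J(\theta)$ by the definition of average reward in terms of the stationary distribution. The $-J(\theta')$ term is a constant and stays. The critical step is the cancellation of the two $V^{\pi_{\theta'}}$ terms: by Assumption \ref{assumption:erogidcity}, the chain induced by $\pi_\theta$ is ergodic with stationary distribution $d^{\pi_\theta}$, so for any bounded function $f$,
\begin{equation*}
\mathbf{E}_{s\sim d^{\pi_\theta}}\mathbf{E}_{a\sim \pi_\theta(\cdot|s)}\mathbf{E}_{s'\sim \mathbb{P}(\cdot|s,a)}\bigl[f(s')\bigr] \;=\; \mathbf{E}_{s\sim d^{\pi_\theta}}\bigl[f(s)\bigr],
\end{equation*}
applied with $f = V^{\pi_{\theta'}}$. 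This wipes out the $V$-terms and leaves exactly $J(\theta) - J(\theta')$, which is what we want.

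The main obstacle, though a mild one, is justifying the invariance identity cleanly: $d^{\pi_\theta}$ is stationary under the transition kernel $\sum_a \pi_\theta(a|s)\mathbb{P}(\cdot|s,a)$, not under $\mathbb{P}$ paired with an arbitrary policy, so one must be careful that the expectation over $a$ uses $\pi_\theta$ (the same policy that induces $d^{\pi_\theta}$), not $\pi_{\theta'}$. A secondary technical point is absolute convergence of the expectations; this is immediate because $r$ is bounded and $V^{\pi_{\theta'}}$ is bounded under Assumption \ref{assumption:erogidcity} (finite state/action spaces, unique stationary distribution). With these justifications in place, the identity follows in a couple of lines, so the writeup will be short and essentially a direct computation.
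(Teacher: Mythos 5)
Your proposal is correct. The paper itself states Lemma \ref{lem_performance_diff} without proof, presenting it as a known supporting result (the average-reward performance difference lemma, inherited from \cite{bai2023regret}), so there is no in-paper argument to compare against; your derivation supplies the standard and complete justification. The two key steps — writing $A^{\pi_{\theta'}}(s,a) = r(s,a) - J(\theta') + \mathbf{E}_{s'\sim\mathbb{P}(\cdot|s,a)}[V^{\pi_{\theta'}}(s')] - V^{\pi_{\theta'}}(s)$ via the Bellman identity, and cancelling the two value terms using the invariance of $d^{\pi_\theta}$ under the kernel $\sum_a \pi_\theta(a|s)\mathbb{P}(\cdot|s,a)$ — are exactly right, and you correctly flag the only place where one could go wrong, namely that the action must be drawn from $\pi_\theta$ (the policy inducing the stationary distribution) rather than $\pi_{\theta'}$ (the policy whose advantage is being evaluated). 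The boundedness remarks are immediate under Assumption \ref{assumption:erogidcity} with finite state and action spaces.
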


We can now provide the general framework lemma.

\begin{lemma} 

    Suppose a general gradient ascent algorithm updates the policy parameter in the following way.
    \begin{equation}
	\theta_{t+1}=\theta_t+ \alpha_t h_t
    \end{equation}
    When Assumptions \ref{assum:policy_conditions}, \ref{assump_transfer_error}, and \ref{lem_performance_diff} hold, we have the following inequality for any $T$.
    \begin{equation}
	\begin{split}
            &J^{*}-\frac{1}{T}\sum_{t=1}^{T}J(\theta_t)\leq \sqrt{\mathcal{E}^{actor}_{app}}+\frac{B}{T}\sum_{t}^{T}\Vert(h_t-h^*_t)\Vert+\frac{K}{2T}\sum_{t=1}^{T}\alpha_t\Vert h_t\Vert^2+\frac{1}{T}\sum_{t=1}^{T}\frac{1}{\alpha_t}\mathbb{E}_{s\sim d^{\pi^*}}\zeta_t	
        \end{split}
    \end{equation}
    where $h^*_t:=h^*_{\theta_t}$ and $h^*_{\theta_t}$ is defined in \eqref{eq:NPG_direction}, $J^*=J(\theta^*)$, and $\pi^*=\pi_{\theta^*}$ where $\theta^*$ is the optimal parameter, and $\zeta_t = [KL(\pi^*(\cdot\vert s)\Vert\pi_{\theta_t}(\cdot\vert s))-KL(\pi^*(\cdot\vert s)\Vert\pi_{\theta_{t+1}}(\cdot\vert s))]$.
\end{lemma}

\begin{proof}
    We start the proof by lower bounding the difference between the KL divergence between $\pi*$ and $\pi_{\theta}$ and the KL divergence between $\pi*$ and $\pi_{\theta + 1}$.
	\begin{align}
            &\mathbb{E}_{s\sim d^{\pi^*}}[KL(\pi^*(\cdot\vert s)\Vert\pi_{\theta_t}(\cdot\vert s))-KL(\pi^*(\cdot\vert s)\Vert\pi_{\theta_{t+1}}(\cdot\vert s))]\\
            &=\mathbb{E}_{s\sim d^{\pi^*}}\mathbb{E}_{a\sim\pi^*(\cdot\vert s)}\bigg[\log\frac{\pi_{\theta_{t+1}(a\vert s)}}{\pi_{\theta_t}(a\vert s)}\bigg]\\
            &\overset{(a)}\geq\mathbb{E}_{s\sim d^{\pi^*}}\mathbb{E}_{a\sim\pi^*(\cdot\vert s)}[\nabla_\theta\log\pi_{\theta_t}(a\vert s)\cdot(\theta_{t+1}-\theta_t)]-\frac{K}{2}\Vert\theta_{t+1}-\theta_t\Vert^2\\
            &= \mathbb{E}_{s\sim d^{\pi^*}}\mathbb{E}_{a\sim\pi^*(\cdot\vert s)}[\nabla_{\theta}\log\pi_{\theta_t}(a\vert s)\cdot \alpha_t h_t]-\frac{K \alpha_t^2}{2}\Vert h_t\Vert^2\\
            &= \mathbb{E}_{s\sim d^{\pi^*}}\mathbb{E}_{a\sim\pi^*(\cdot\vert s)}[\nabla_\theta\log\pi_{\theta_t}(a\vert s)\cdot \alpha_t h^*_t]+ \mathbb{E}_{s\sim d^{\pi^*}}\mathbb{E}_{a\sim\pi^*(\cdot\vert s)}[\nabla_\theta\log\pi_{\theta_t}(a\vert s)\cdot \alpha_t( h_t- h^*_t)]-\frac{K \alpha_t^2}{2}\Vert h_t\Vert^2\\
            &= \alpha_t[J^{*}-J(\theta_t)]+ \mathbb{E}_{s\sim d^{\pi^*}}\mathbb{E}_{a\sim\pi^*(\cdot\vert s)}[\nabla_\theta\log\pi_{\theta_t}(a\vert s)\cdot \alpha_t h^*_t]- \alpha_t[J^{*}-J(\theta_t)]\\
            &\quad + \mathbb{E}_{s\sim d^{\pi^*}}\mathbb{E}_{a\sim\pi^*(\cdot\vert s)}[\nabla_\theta\log\pi_{\theta_t}(a\vert s)\cdot \alpha_t( h_t- h^*_t)]-\frac{K \alpha_t^2}{2}\Vert h_t\Vert^2\\		&\overset{(b)}= \alpha_t[J^{*}-J(\theta_t)]+ \alpha_t\mathbb{E}_{s\sim d^{\pi^*}}\mathbb{E}_{a\sim\pi^*(\cdot\vert s)}\bigg[\nabla_\theta\log\pi_{\theta_t}(a\vert s)\cdot h^*_t-A^{\pi_{\theta_t}}(s,a)\bigg]\\
            &\quad 
            + \alpha_t\mathbb{E}_{s\sim d^{\pi^*}}\mathbb{E}_{a\sim\pi^*(\cdot\vert s)}[\nabla_\theta\log\pi_{\theta_t}(a\vert s)\cdot( h_t- h^*_t)]-\frac{K \alpha_t^2}{2}\Vert h_t\Vert^2
            \end{align}
            {Taking the conditional expectation in the above expression, and using the equality in \eqref{eqn:31ss_mean}, we can write
            \begin{align}
                &\mathbb{E}_{t}\mathbb{E}_{s\sim d^{\pi^*}}[KL(\pi^*(\cdot\vert s)\Vert\pi_{\theta_t}(\cdot\vert s))-KL(\pi^*(\cdot\vert s)\Vert\pi_{\theta_{t+1}}(\cdot\vert s))]\\
            &\geq  \alpha_t[J^{*}-\mathbb{E}_{t}[J(\theta_t)]]+ \alpha_t\mathbb{E}_{t}\mathbb{E}_{s\sim d^{\pi^*}}\mathbb{E}_{a\sim\pi^*(\cdot\vert s)}\bigg[\nabla_\theta\log\pi_{\theta_t}(a\vert s)\cdot h^*_t-A^{\pi_{\theta_t}}(s,a)\bigg]\\
            &\quad 
            + \alpha_t\mathbb{E}_{s\sim d^{\pi^*}}\mathbb{E}_{a\sim\pi^*(\cdot\vert s)}[\nabla_\theta\log\pi_{\theta_t}(a\vert s)\cdot( h_t^{j_{\max}}- h^*_t)]-\frac{K \alpha_t^2}{2}\mathbb{E}_{t}\Vert h_t\Vert^2
            \end{align}}
            
            \begin{align}
            &\overset{(c)}\geq \alpha_t[J^{*}-J(\theta_t)]- \alpha_t\sqrt{\mathbb{E}_{s\sim d^{\pi^*}}\mathbb{E}_{a\sim\pi^*(\cdot\vert s)}\bigg[\bigg(\nabla_\theta\log\pi_{\theta_t}(a\vert s)\cdot h^*_t-A^{\pi_{\theta_t}}(s,a)\bigg)^2\bigg]}\\
            &- \alpha_t\mathbb{E}_{s\sim d^{\pi^*}}\mathbb{E}_{a\sim\pi^*(\cdot\vert s)}\Vert\nabla_\theta\log\pi_{\theta_t}(a\vert s)\Vert_2\Vert( {h_t^{j_{\max}}}- h^*_t)\Vert-\frac{K \alpha_t^2}{2}\Vert h_t\Vert^2\\
            &\overset{(d)}\geq \alpha_t[J^{*}-J(\theta_t)]- \alpha_t\sqrt{\mathcal{E}^{actor}_{app}}- \alpha_t B\Vert( {h_t^{j_{\max}}}- h^*_t)\Vert-\frac{K \alpha_t^2}{2}\Vert h_t\Vert^2\\
	\end{align}	
    where we use Assumption \ref{assum:policy_conditions} for step (a) and Lemma \ref{lem_performance_diff} for step (b). Step (c) uses the convexity of the function $f(x)=x^2$, and (d) comes from Assumption \ref{assump_transfer_error}. We can get by rearranging terms,
    \begin{equation}
	\begin{split}
            J^{*}-J(\theta_t)\leq & \sqrt{\mathcal{E}^{actor}_{app}}+ B\Vert( h_t^{j_{\max}} - h^*_t)\Vert+\frac{K \alpha_t}{2}\Vert h_t\Vert^2\\
            &+\frac{1}{ \alpha_t}\mathbb{E}_{s\sim d^{\pi^*}}[KL(\pi^*(\cdot\vert s)\Vert\pi_{\theta_t}(\cdot\vert s))-KL(\pi^*(\cdot\vert s)\Vert\pi_{\theta_{t+1}}(\cdot\vert s))]
	\end{split}
    \end{equation}
    Because KL divergence is either $0$ or positive, we can conclude the proof by taking the average over $T$ trajectories. 
\end{proof}
\section{Proof of Theorem \ref{theorem_1_statement}}\label{proof_of_thm1}
To use \ref{eq:general_bound} for our convergence analysis, we will take the expectation of the second term. With $h_t^{MLMC} = h_t$, note that,
\begin{equation}
    \begin{split}
        \bigg(\frac{1}{T}\sum_{t=1}^{T}\mathbb{E}\Vert {h_t^{j_{\max}}}- h^*_t\Vert\bigg)^2\leq &\frac{1}{T}\sum_{t=1}^{T}\mathbb{E}\bigg[\Vert {h_t^{j_{\max}}}-h^*_t\Vert^2\bigg]\nonumber\\
        =&\frac{1}{T}\sum_{t=1}^{T}\mathbb{E}\bigg[\Vert {h_t^{j_{\max}}}-F(\theta_t)^\dagger\nabla_\theta J(\theta_t)\Vert^2\bigg]\nonumber\\
        \leq& \frac{2}{T}\sum_{t=1}^{T}\mathbb{E}\bigg[\Vert {h_t^{j_{\max}}}-\nabla_{\theta}J(\theta_t)\Vert^2\bigg]+\frac{2}{T}\sum_{t=1}^{T}\mathbb{E}\bigg[\Vert \nabla_{\theta} J(\theta_t)- F(\theta_t)^\dagger\nabla_\theta J(\theta_t)\Vert^2\bigg] \nonumber\\
        \overset{(a)}{\leq} &\frac{2}{T}\sum_{t=1}^{T}\mathbb{E}\bigg[\Vert  {h_t^{j_{\max}}}-\nabla_{\theta}J(\theta_t)\Vert^2\bigg]+\frac{2}{T}\sum_{t=1}^{T}\left(1+\dfrac{1}{\mu_F^2}\right)\mathbb{E}\bigg[\Vert \nabla_\theta J(\theta_t)\Vert^2\bigg],
    \end{split}
\end{equation}
where $(a)$ uses Assumption \ref{assump_4}. Taking the square root of both sides and from $\sqrt{a + b} \leq \sqrt{a} + \sqrt{b}$ we arrive at:
\begin{equation}
    \label{eq_second_term_bound_sqrt}
    \begin{split}
        \bigg(\frac{1}{T}\sum_{t=1}^{T}\mathbb{E}\Vert {h_t^{j_{\max}}}- h^*_t\Vert\bigg)\leq  &\sqrt{\frac{2}{T}\sum_{t=1}^{T}\mathbb{E}\bigg[\Vert  {h_t^{j_{\max}}}-\nabla_{\theta}J(\theta_t)\Vert^2\bigg]}+\sqrt{\frac{2}{T}\sum_{t=1}^{T}\left(1+\dfrac{1}{\mu_F^2}\right)\mathbb{E}\bigg[\Vert \nabla_\theta J(\theta_t)\Vert^2\bigg]},
    \end{split}
\end{equation}

We can also bound the third term of the RHS of \ref{eq:general_bound} with Lemma \ref{lemma:42}
\small
\begin{equation}
    \label{eq:first_bound_with_adagrad}
    \begin{split}
        \frac{R}{2T}\sum_{t=1}^{T}\alpha_t\Vert h_t\Vert^2  \leq \frac{R}{2T}\sum_{t=1}^{T}\frac{\Vert h_t\Vert^2}{\sqrt{\sum_{o=1}^{t}\Vert h_o\Vert^2}} \leq \frac{R}{T}\sqrt{\sum_{t=1}^{T}\Vert h_t\Vert^2}
    \end{split}
\end{equation}
\normalsize
We can also bound the fourth term using the fact that it is a telescoping sum and that $\alpha_T < \alpha_t$,
\small
\begin{equation}\label{eq:app_second_bound_with_adagrad}
    \begin{split}
        \frac{1}{T}\sum_{t=1}^{T}\frac{1}{\alpha_t}\mathbb{E}_{s\sim d^{\pi^*}}[\zeta_t] \leq& \frac{1}{T}\sum_{t=1}^{T}\frac{\mathbb{E}_{s\sim d^{\pi^*}}[\zeta_t]}{\alpha_T}\ \\=& \frac{1}{T}\frac{\mathbb{E}_{s\sim d^{\pi^*}}[KL(\pi^*(\cdot\vert s)\Vert\pi_{\theta_1}(\cdot\vert s))]}{\alpha_T} \\\leq& \frac{\mathbb{E}_{s\sim d^{\pi^*}}[KL(\pi^*(\cdot\vert s)\Vert\pi_{\theta_1}(\cdot\vert s))]\sqrt{\sum_{t=1}^{T}\Vert h_t\Vert^2}}{T\alpha'_T} .
    \end{split}
\end{equation}
\normalsize


Taking the expectation of both sides of  \ref{eq:general_bound} and plugging in \ref{eq_second_term_bound_sqrt}, \ref{eq:first_bound_with_adagrad}, and \ref{eq:app_second_bound_with_adagrad}, ignoring constants:
\small
\begin{equation}
    \begin{split}
            J^{*}-\frac{1}{T}\sum_{t=1}^{T}\mathbb{E}\Vert J(\theta_t)\Vert
            \leq &\sqrt{\mathcal{E}^{actor}_{app}}   + \sqrt{\frac{1}{T}\sum_{t=1}^T \mathbf{E}\Vert h_t^{j_{max}} -\nabla J(\theta_t)\Vert^2} \\&+\sqrt{\frac{1}{T}\sum_{t=1}^{T}\mathbf{E}\bigg[\Vert \nabla_\theta J(\theta_t)\Vert^2\bigg]} +\frac{1}{T}\sqrt{\sum_{t=1}^{T}\mathbf{E}\bigg[\Vert  h_t\Vert^2\bigg]} 
            .
    \end{split}
\end{equation}
\normalsize
From Lemma \ref{lemma:31ss} we can bound the summation of the expected variance of the MLMC gradient. Ignoring the $G_H$ constant,
\small
\begin{equation}
    \begin{split}
         \sum_{t=1}^{T}\mathbb{E}\bigg[\Vert  h_t\Vert^2\bigg]  &\leq \sum_{t=1}^{T}\wo{ \tau_{mix}^{\theta_t} \log T_{max} } + \sum_{t=1}^{T} \log(T_{max}) T_{max}\mathcal{E}_2(t)+\sum_{t=1}^{T} \log(T_{max}) T_{max} (\mathcal{E}^{critic}_{app})^2 
    \end{split}
\end{equation}
\normalsize
We can bound the third term of the RHS by utilizing the maximum mixing time, $\tau_{mix}$
\begin{equation}
    \begin{split}
        \sum_{t=1}^{T}\wo{ \tau_{mix}^{\theta_t} \log T_{max} } \leq \wo{ T\tau_{mix} \log T_{max} }
    \end{split}
\end{equation}

For the second term by using \ref{ineq:critic_2}, 
\small
\begin{equation}
    \begin{split}
        \sum_{t=1}^{T} \log(T_{max}) T_{max}\mathcal{E}(t)  
     \leq & T(\log T_{\max})T_{\max}\wo{ \tau_{mix} (\log T_{\max})} \bo{T^{-\frac{1}{2}}} \\ & + T(\log T_{\max})T_{\max}\wo{ \tau_{mix}\frac{(\log T_{\max})}{T_{\max}} }\\ 
         =&\wo{T \tau_{mix} (\log T_{\max})^2T_{\max}}.
    \end{split}
\end{equation}
\normalsize
The third term can be simply bounded as follows:
\small
\begin{equation}
    \begin{split}
       \sum_{t=1}^{T} \log(T_{max}) T_{max} (\mathcal{E}^{critic}_{app})^2 \leq  \bo{T\log(T_{max}) T_{max} (\mathcal{E}^{critic}_{app})^2}.
    \end{split}
\end{equation}
\normalsize
We can now bound the summation of the expected variance of the MLMC gradient:
\small
\begin{equation}\label{variance_bound}
    \begin{split}
         \frac{1}{T}\sum_{t=1}^{T}\mathbb{E}\bigg[\Vert  h_t\Vert^2\bigg] 
         &\leq
         \wo{T \tau_{mix} \log T_{max} } + \wo{ T\tau_{mix} (\log T_{\max})^2T_{\max}} +\bo{T\log(T_{max}) T_{max}(\mathcal{E}^{critic}_{app})^2}. 
    \end{split}
\end{equation}

\normalsize

Taking the square root of both sides, from $\sqrt{a + b} \leq \sqrt{a} + \sqrt{b}$, and dividing by $T$: 
\small

\begin{equation}\label{variance_bound_sqrt_with_rt_T}
    \begin{split}
         \frac{1}{T}\sqrt{\sum_{t=1}^{T}\mathbb{E}\bigg[\Vert  h_t\Vert^2\bigg]}  
         &\leq  \wo{ \frac{\sqrt{\tau_{mix} \log T_{max} }}{T^{\frac{1}{2}}}} + \wo{ \frac{\sqrt{\tau_{mix} T_{\max}} \log T_{\max}}{T^{\frac{1}{2}}}} 
          + \bo{\frac{\sqrt{\log(T_{max}) T_{max}} \mathcal{E}^{critic}_{app}}{T^{\frac{1}{2}}}} 
    \end{split}
\end{equation}
\normalsize
From Lemma \ref{lemma:convergence_rate} we can bound the square root of the summation of the expectation of the gradient norm squared:
\small

\begin{equation}\label{convergence_rate_bound}
    \begin{split}
    &\sqrt{\frac{1}{T} \sum_{t=1}^T \mathbb{E}
     \left[ \bignorm{ \nabla J(\theta_t) }^2 \right]} \leq \bo{ \sqrt{\mathcal{E}^{critic}_{app}} }
        + \wo{ \frac{\sqrt{\tau_{mix} \log T_{\max}}}{{T^{\frac{1}{4}}}}} 
     + \wo{ { \frac{\sqrt{\tau_{mix}\log T_{\max}}}{\sqrt{T_{\max}}}}} 
    \end{split}
\end{equation}
\normalsize
We can also bound the error between the first term with Equation \ref{eqn:est_err_variance} and Lemma \ref{lemma:31ss}:

\small

\begin{equation}\label{app_estimation_error_bound}
    \begin{split}
    &\sqrt{\frac{1}{T}\sum_{t=1}^T \mathbf{E}\Vert h_t^{j_{max}} -\nabla J(\theta_t)\Vert^2} \leq \bo{ \mathcal{E}^{critic}_{app} }
        + \wo{ \frac{\sqrt{\tau_{mix} \log T_{\max}}}{{T^{\frac{1}{4}}}}} 
     + \wo{ { \frac{\sqrt{\tau_{mix}\log T_{\max}}}{\sqrt{T_{\max}}}}} 
    \end{split}
\end{equation}
\normalsize

We can see that all terms of \eqref{app_estimation_error_bound} absorb the terms of \eqref{convergence_rate_bound}. Combining \eqref{variance_bound_sqrt_with_rt_T} and \eqref{app_estimation_error_bound} we can get the final global convergence.

\section{Corrected Analysis of Multi-level Monte Carlo}
In this section, we wish to provide a corrected analysis for Lemma \ref{lemma:convergence_rate}. The issue lies in the convergence rate of the average reward tracker. We first give an overview of the problem and how it affects Lemma \ref{lemma:convergence_rate}. We then provide a corrected version of the average reward tracking analysis.

\subsection{Overview of Correction}

We repeat Lemma \ref{lemma:convergence_rate},

\begin{lemma}
    Assume $J(\theta)$ is $L$-smooth, $\sup_{\theta} | J(\theta) | \leq M$, and $\norm{ \nabla J(\theta) }, \norm{ h_t^{MLMC} } \leq G_H$, for all $\theta, t$ and under assumptions of Lemma \ref{thm:critic_analysis_main_body}, we have
    \begin{align}\label{final_bound}
    \frac{1}{T} \sum_{t=1}^T \mathbb{E}
     \left[ \bignorm{ \nabla J(\theta_t) }^2 \right]\leq & \bo{ \mathcal{E}^{critic}_{app} }+\wo{ \frac{\tau_{mix} \log T_{\max}}{\sqrt{T}}}  +\wo{ { \frac{\tau_{mix}\log T_{\max}}{T_{\max}}} } .
    \end{align}
\end{lemma}

The above lemma is correct. However, the analysis for it does not match this final statement. Specifically, given the current analysis provided in \cite{suttle2023beyond}, the $\wo{ { \frac{\tau_{mix}\log T_{\max}}{T_{\max}}} }$ term should actually be $\wo{ \sqrt{ \frac{\tau_{mix}\log T_{\max}}{T_{\max}}}}$. The term stems from Lemma \ref{thm:critic_analysis_main_body}, the convergence of the critic estimation $\mathcal{E}(t)$, which we repeat here,
\begin{lemma} 
Let $\beta_t = \gamma_t = (1 + t)^{-\nu}, \alpha = \alpha_t' / \sqrt{\sum_{k=1}^t \norm{h^{MLMC}_t}^2}$, and $\alpha_t' = (1 + t)^{-\sigma}$, where $0 < \nu < \sigma < 1$. Then
\small
\begin{align}
    \frac{1}{T} \sum_{t=1}^T  \mathcal{E}(t) \leq \bo{T^{\nu - 1}} + \bo{T^{-2(\sigma - \nu)}} 
    + \wo{  \tau_{mix} \log T_{\max}} \bo{T^{-\nu}} 
    %
    %
    + \wo{  \tau_{mix} \frac{\log T_{\max}}{T_{\max}} }. 
\end{align}
\end{lemma}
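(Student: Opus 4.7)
The plan is to derive a squared-error recursion for the critic error $\mathcal{E}(t) = \mathbb{E}[\|\omega_t - \omega^{*}(\theta_t)\|^2]$ that couples to a separate squared-error recursion for the reward-tracking error $\mathcal{E}_r(t) = \mathbb{E}[(\eta_t - J(\theta_t))^2]$. The key methodological point -- and the source of the improvement over the analysis in \cite{suttle2023beyond} -- is that every cross term is bounded by Young's inequality rather than Cauchy--Schwarz, so $\mathcal{E}_r(t)$ enters the critic recursion linearly rather than as $\sqrt{\mathcal{E}_r(t)}$. That is precisely what keeps the $T_{\max}$ dependence of the final bound at scale $1/T_{\max}$ instead of $\sqrt{1/T_{\max}}$.

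First I would expand one step of the critic update. Decomposing $g_t^{MLMC}$ into an idealized stationary gradient, mean-zero noise, and MLMC bias $b_t$, and combining strong convexity of the MSPBE with Young's inequality on the cross terms, one obtains a recursion of the form
\begin{align*}
\mathbb{E}_{t-1}\|\omega_{t+1}-\omega^{*}(\theta_t)\|^2 &\leq (1-c\beta_t)\|\omega_t-\omega^{*}(\theta_t)\|^2 + C\beta_t^2\,\mathbb{E}_{t-1}\|g_t^{MLMC}\|^2 \\
&\quad + C\beta_t\|b_t\|^2 + C\beta_t(\eta_t-J(\theta_t))^2.
\end{align*}
The drift $\omega^{*}(\theta_{t+1})-\omega^{*}(\theta_t)$ is absorbed using Lipschitzness of $\omega^{*}$ in $\theta$ together with the AdaGrad bound $\|\theta_{t+1}-\theta_t\|\leq \alpha_t \|h_t^{MLMC}\|$, which after Lemma~\ref{lemma:42} contributes the $\bo{T^{-2(\sigma-\nu)}}$ term. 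The variance factor is controlled by the MLMC second-moment bound of Lemma~\ref{lemma:31ss}, giving $\mathbb{E}\|g_t^{MLMC}\|^2=\wo{\tau_{mix}\log T_{\max}}$ plus a critic-approximation contribution, while the MLMC bias satisfies $\|b_t\|^2 = \wo{1/T_{\max}^2}$ by the truncation probability of the geometric level selector.

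For the $\mathcal{E}_r(t)$ recursion, let $\tilde\eta_t := \eta_t-J(\theta_t)$ and expand $\tilde\eta_{t+1}$ using the reward-tracking update together with $|J(\theta_{t+1})-J(\theta_t)| = \mathcal{O}(\alpha_t\|h_t^{MLMC}\|)$; squaring and applying Young's inequality term by term yields
\begin{align*}
\mathcal{E}_r(t+1) &\leq (1-c\gamma_t)\mathcal{E}_r(t) + C\gamma_t^2\,\mathbb{E}\|f_t^{MLMC}\|^2 + C\,\mathbb{E}|J(\theta_{t+1})-J(\theta_t)|^2 + C\gamma_t\|b_t^{f}\|^2,
\end{align*}
and telescoping with $\gamma_t = (1+t)^{-\nu}$ gives $\frac{1}{T}\sum_t \mathcal{E}_r(t) = \wo{\tau_{mix}\log T_{\max}\,T^{-\nu}} + \wo{\tau_{mix}\log T_{\max}/T_{\max}}$. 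Crucially, no square root around $1/T_{\max}$ is introduced, because the MLMC bias enters in squared form throughout. Substituting this bound back into the critic recursion and telescoping with $\beta_t = (1+t)^{-\nu}$ then produces the four claimed terms $\bo{T^{\nu-1}}$, $\bo{T^{-2(\sigma-\nu)}}$, $\wo{\tau_{mix}\log T_{\max}}\bo{T^{-\nu}}$, and $\wo{\tau_{mix}\log T_{\max}/T_{\max}}$.

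The main obstacle will be the bookkeeping of the cross-term expansions: carrying the MLMC biases of $g_t$ and $f_t$ in squared form through both recursions and tuning the Young's-inequality constants so that the effective contraction coefficient in the critic recursion remains at least $(1-c\beta_t/2)$ after reabsorbing the reward-tracking term, without corrupting the leading $T^{\nu-1}$ and $T^{-2(\sigma-\nu)}$ rates. A secondary delicate point is that the AdaGrad actor stepsize couples $\|\theta_{t+1}-\theta_t\|$ to the random $\|h_t^{MLMC}\|$; closing the telescoping sums then requires applying Lemma~\ref{lemma:42} after squaring so that no additional polylogarithmic factors in $T$ leak into the final bound.
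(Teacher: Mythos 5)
Your route rests on the same decisive idea as the paper's correction: control the cross term between the tracking error and the MLMC deviation using the \emph{second-moment} bound of Lemma~\ref{lemma:a6}, which scales as $\tau_{mix}\log T_{\max}/T_{\max}$, instead of the first-moment bound, which scales as its square root and is where the spurious $\sqrt{1/T_{\max}}$ originated. The paper packages this as an aggregate Cauchy--Schwarz over $t$ (giving $2\sqrt{Z(T)}\sqrt{G(T)}$ with $G(T)=T\cdot\wo{\tau_{mix}\log T_{\max}/T_{\max}}$) followed by resolving the self-bounding inequality $Z(T)\le A(T)+2\sqrt{Z(T)F(T)}+2\sqrt{Z(T)G(T)}$, which is just Young's inequality applied once at the end; your per-step Young's inequality is an equivalent and arguably cleaner packaging of the same absorption. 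Note also that the paper does not rebuild the critic recursion: it imports the critic analysis of \cite{suttle2023beyond} and only repairs the reward-tracking theorem whose $1/T_{\max}$ term feeds into Lemma~\ref{thm:critic_analysis_main_body}. Your two-recursion reconstruction is more self-contained but is doing extra work relative to what the paper actually argues.

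There is one concrete error: the claim that the MLMC bias satisfies $\|b_t\|^2=\wo{1/T_{\max}^2}$ ``by the truncation probability of the geometric level selector'' is wrong and is inconsistent with your own final bound. By \eqref{eqn:31ss_mean} the MLMC estimator has the same mean as $h_t^{j_{\max}}$, so the relevant bias is the Markov-chain mixing bias of a length-$T_{\max}$ trajectory average, and Lemma~\ref{lemma:a6} gives $\E\big[\|f_t^{j_{\max}}-\nabla F(\eta_t)\|^2\big]=\wo{\tau_{mix}\log T_{\max}/T_{\max}}$, not $\wo{1/T_{\max}^2}$. If the squared bias really were $\wo{1/T_{\max}^2}$, your telescoped recursion would terminate in a $\wo{1/T_{\max}^2}$ term rather than the claimed $\wo{\tau_{mix}\log T_{\max}/T_{\max}}$; the fact that your final expression contains the correct term means you are implicitly using the correct second-moment value, but the step as written does not supply it. With that value corrected, the bookkeeping you outline does close and reproduces the four terms of the lemma.
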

Once again the $\wo{ { \frac{\tau_{mix}\log T_{\max}}{T_{\max}}} }$ term should actually be $\wo{ \sqrt{ \frac{\tau_{mix}\log T_{\max}}{T_{\max}}}}$ based on the current analysis. This term from Lemma \ref{thm:critic_analysis_main_body} is dependent on the average reward tracking error. We repeat its convergence theorem from \cite{suttle2023beyond} below,
\begin{theorem} \label{thm:reward_analysis_app_orig}
Let $\beta_t = \gamma_t = (1 + t)^{-\nu}, \alpha = \alpha_t' / \sqrt{\sum_{k=1}^t \norm{h_t}^2}$, and $\alpha_t' = (1 + t)^{-\sigma}$, where $0 < \nu < \sigma < 1$. 
Then
\begin{align}
    \frac{1}{T} \sum_{t=1}^T \E \left[ (\eta_t - \eta_t^*)^2 \right] \leq \bo{T^{\nu - 1}} + \bo{T^{-2(\sigma - \nu)}} 
    + \wo{  \tau_{mix} \log T_{max}} \bo{T^{-\nu}} 
    + \wo{ \sqrt{  \tau_{mix} \frac{\log T_{max}}{T_{max}} } }.
\end{align}
\end{theorem}
The proof of Theorem \ref{thm:reward_analysis_app_orig} matches the statement above. In the next subsection we provide a correct version of the statement along with a proof that will align with Lemmas \ref{thm:critic_analysis_main_body} and \ref{lemma:convergence_rate}.
\subsection{Corrected Average Reward Tracking Error Analysis}

Before we provide the correct version of Theorem \ref{thm:reward_analysis_app_orig}, we provide the following lemma from \cite{dorfman2022} and utilized by \cite{suttle2023beyond} for Theorem \ref{thm:reward_analysis_app_orig} as we will still use it for the correct version of the theorem. In \cite{suttle2023beyond}, the following lemma is written for MLMC gradient estimator in general. Our restatement is tailored to the MLMC gradient estimation of the reward tracking error. 

\begin{lemma}{Lemma A.6, \cite{dorfman2022}.} \label{lemma:a6}
Given a policy $\pi_{\theta}$, assume we the trajectory sampled from it is $z_t = \{ z_t^i = (s_t^i, a_t^i, r_t^i, s_t^{i+1}) \}_{i \in [N]}$ starting from $s_t^0 \sim \mu_0(\cdot)$, where $\mu_0$ is the initial state distribution. Let $\nabla F(\eta)$ be an average reward tracking gradient that we wish to estimate over $z_t$, where $\mathbb{E}_{z \sim \mu_{\theta_t}, \pi_{\theta_t}} \left[ f(\eta, z) \right] = \nabla F(x)$, and $\eta \in \mathcal{K} \subset \mathbb{R}^k$ is the parameter of the estimator. Finally, assume that $\norm{ f(\eta, z) }, \norm{ \nabla F(\eta) } \leq 1$, for all $\eta \in \mathcal{K}, z \in \mathcal{S} \times \mathcal{A} \times \mathbb{R} \times \mathcal{S}$. Define $f_t^N = \frac{1}{N} \sum_{i=1}^N f(\eta_t, z_t^i)$. Fix $T_{max} \in \mathbb{N}$ and let $K = \tau_{mix} \lceil 2 \log T_{max} \rceil$. Then, for every $N \in \left[ T_{max} \right]$ and every $\eta_t \in \mathcal{K}$ measurable w.r.t. $\mathcal{F}_{t-1} = \sigma(\theta_k, \eta_k, \omega_k, z_k; k \leq t-1)$, where $\theta$ and $\omega$ are the parameters of the actor and critic, respectively,
\begin{align}
    \mathbb{E} \left[ \norm{ f_t^N - \nabla F(\eta_t) } \right] &\leq O \left(  \sqrt{ \log KN } \sqrt{ \frac{K}{N} } \right), \\
    \mathbb{E} \left[ \norm{ f_t^N - \nabla F(\eta_t) }^2 \right] &\leq O \left(  \log (KN) \frac{K}{N} \right).
\end{align}
\end{lemma}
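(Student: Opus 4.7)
The plan is to establish the two bounds by a block-coupling reduction from the Markov-chain average to an average of approximately i.i.d.\ samples drawn from $\mu_\theta$. Note first that the $L^1$ bound follows from the $L^2$ bound by Jensen's inequality ($\mathbb{E}[\norm{X}] \leq \sqrt{\mathbb{E}[\norm{X}^2]}$), so the main task is to show $\mathbb{E}[\norm{f_t^N - \nabla F(\eta_t)}^2] = O(\log(KN)\,K/N)$.

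I would begin by partitioning the trajectory $z_t^1, \dots, z_t^N$ into $M = \lceil N/K \rceil$ contiguous blocks of length at most $K = \tau_{mix}\lceil 2 \log T_{max}\rceil$. The choice of $K$ is dictated by the standard geometric decay $m(\ell \tau_{mix}^{\theta}; \theta) \leq 2^{-\ell}$, which yields $m(K;\theta) \leq T_{max}^{-2}$. Conditional on the last state of one block, the first state of the next block is therefore within TV distance $T_{max}^{-2}$ of $\mu_\theta$. I would then construct a coupling of the true trajectory with an auxiliary one $\tilde z_t^i$ whose block-starts are drawn exactly from $\mu_\theta$; a union bound over the $M \leq N$ block boundaries shows that the coupling succeeds on every boundary except with probability at most $N/T_{max}^2$, and on the success event the two trajectories agree pointwise.

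Given the coupling, I would split the error as $\norm{f_t^N - \nabla F(\eta_t)}^2 \leq 2\norm{f_t^N - \tilde f_t^N}^2 + 2\norm{\tilde f_t^N - \nabla F(\eta_t)}^2$. The first, coupling-failure, term contributes at most $O(N/T_{max}^2) = O(1/T_{max})$ using $\norm{f} \leq 1$ together with the probability estimate above; this is absorbed into the final rate. For the second, the $M$ block sums are \emph{independent} because their starting states are i.i.d.\ from $\mu_\theta$, and each block sum has magnitude at most $K$ in the worst case and mean-zero deviation controlled by the within-block correlation. A standard Bernstein/Hoeffding calculation over the $M$ independent block sums gives $\mathbb{E}[\norm{\tilde f_t^N - \nabla F(\eta_t)}^2] = O(MK^2/N^2) = O(K/N)$. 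The logarithmic factor $\log(KN)$ then arises either from a sub-Gaussian tail-to-moment conversion or, equivalently, from handling the residual within-block correlation with a discrete Azuma-type bound.

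The main technical hurdle is the joint calibration of the block length: $K$ must be long enough for block endpoints to decouple (delivering the near-independence essential to the concentration step), but short enough that $K/N$ remains the dominant variance term. Setting $K = \tau_{mix}\lceil 2 \log T_{max}\rceil$ is exactly the balance point, making the coupling failure of order $1/T_{max}$ while keeping the variance at $O(K/N)$. Once $K$ is fixed, the remainder is bookkeeping: combining the coupling-failure bound with the independent-block variance bound yields the second-moment estimate, and Jensen's inequality then delivers the first-moment estimate, completing both statements of the lemma.
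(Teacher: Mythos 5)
You should first be aware that the paper does not actually prove this statement: Lemma \ref{lemma:a6} is imported verbatim (with notation adapted to the reward tracker) from Lemma A.6 of \cite{dorfman2022}, so there is no in-paper proof to match against. That said, your blocking-plus-coupling architecture is the standard route to results of this type and is consistent in spirit with the cited source: the choice $K = \tau_{mix}\lceil 2\log T_{max}\rceil$ indeed gives $m(K;\theta)\leq T_{max}^{-2}$, the coupling-failure contribution $O(N/T_{max}^2)\leq O(1/T_{max})\leq O(K/N)$ is correctly absorbed, the per-block variance bookkeeping $MK^2/N^2 = O(K/N)$ is the right order, and Jensen's inequality does reduce the first bound to the second. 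Note also that the $\log(KN)$ factor is slack relative to a clean $O(K/N)$ variance bound, so you do not need to ``produce'' it; presenting it as an output of an Azuma-type step is unnecessary and slightly confused.

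The one genuine gap is the claim that the $M$ block sums are \emph{independent} because their starting states are i.i.d.\ from $\mu_\theta$. For \emph{contiguous} blocks this does not follow: the start of block $j+1$ is a single transition past the end of block $j$, and severing that dependence in a coupling costs $m(1;\theta)$ in total variation, which is order one, not $T_{max}^{-2}$. Having stationary (even i.i.d.) block-start marginals does not make the joint law of the blocks a product law. The standard repairs are: (i) the even/odd blocking trick, where alternate blocks of length $K$ act as separators so that the retained blocks are $m(K)$-close in TV to an independent family (Berbee/Bradley coupling applied sequentially); (ii) discarding or crudely bounding a length-$K$ burn-in at the head of each block; or (iii) bypassing blocking entirely with a direct covariance summation, $\bigl|\mathrm{Cov}\bigl(f(\eta,z_t^i),f(\eta,z_t^j)\bigr)\bigr| \leq O\bigl(2^{-\lfloor |i-j|/\tau_{mix}\rfloor}\bigr)$ under stationarity, which yields $\mathrm{Var}(f_t^N) = O(\tau_{mix}/N) \leq O(K/N)$ after summing the geometric series. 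Any of these closes the argument; as written, the independence step would fail.
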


Below is the corrected theorem for the reward tracking error analysis and the accompanying proof.
\begin{theorem} \label{thm:reward_analysis_app}
Assume $\gamma_t = (1 + t)^{-\nu}, \alpha = \alpha_t' / \sqrt{\sum_{k=1}^t \norm{h_t}^2}$, and $\alpha_t' = (1 + t)^{-\sigma}$, where $0 < \nu < \sigma < 1$. 
Then
\begin{align}
    \frac{1}{T} \sum_{t=1}^T \E \left[ (\eta_t - \eta_t^*)^2 \right] &\leq \bo{T^{\nu - 1}} + \bo{T^{-2(\sigma - \nu)}} \\
    &+ \wo{  \tau_{mix} \log T_{max}} \bo{T^{-\nu}} \\
    &+ \wo{   \tau_{mix} \frac{\log T_{max}}{T_{max} } }.
\end{align}
\end{theorem}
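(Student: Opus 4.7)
The plan is to set up a Lyapunov-style recursion for $(\eta_t - \eta_t^*)^2$, where $\eta_t^* := J(\pi_{\theta_t})$, directly from the MLMC update $\eta_{t+1} = \eta_t - \gamma_t f_t^{MLMC}$. First I would peel off the policy drift via
\begin{align*}
(\eta_{t+1} - \eta_{t+1}^*)^2 \leq \bigl(1 + \tfrac{\gamma_t}{2}\bigr) (\eta_{t+1} - \eta_t^*)^2 + \bigl(1 + \tfrac{2}{\gamma_t}\bigr) (\eta_t^* - \eta_{t+1}^*)^2,
\end{align*}
then substitute the reward-tracker update into $(\eta_{t+1} - \eta_t^*)^2$ and take conditional expectation with respect to $\mathcal{F}_{t-1}$. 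Using the MLMC mean identity $\mathbb{E}[f_t^{MLMC}|\mathcal{F}_{t-1}] = \mathbb{E}[f_t^{j_{max}}|\mathcal{F}_{t-1}]$ (the reward-tracker analogue of \eqref{eqn:31ss_mean}), the inner cross term splits into a contraction piece proportional to $(\eta_t - \eta_t^*)^2$ and a bias piece $-2\gamma_t(\eta_t - \eta_t^*) b_t$, where $b_t := \mathbb{E}[f_t^{j_{max}}|\mathcal{F}_{t-1}] - (\eta_t - \eta_t^*)$ is the finite-sample Markov-averaging bias.

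The essential correction over \cite{suttle2023beyond} lies in how this bias piece is handled. Rather than bounding $|b_t|$ directly, which is precisely what introduces the spurious $\sqrt{\tau_{mix}\log T_{max}/T_{max}}$, I would apply Young's inequality
\begin{align*}
\bigl| 2\gamma_t (\eta_t - \eta_t^*) b_t \bigr| \leq \gamma_t (\eta_t - \eta_t^*)^2 + \gamma_t b_t^2,
\end{align*}
so that only the \emph{squared} bias propagates into the accumulated error. The second-moment consequence of Lemma \ref{lemma:a6}, or equivalently Jensen applied to its first-moment bound, gives $b_t^2 = \wo{\tau_{mix}\log T_{max}/T_{max}}$ directly, without a square root. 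For the MLMC variance I would invoke the reward-tracker analogue of \eqref{eqn:31ss_2nd_moment}, namely $\mathbb{E}[(f_t^{MLMC})^2|\mathcal{F}_{t-1}] = \wo{\tau_{mix}\log T_{max}}$. For the policy-drift piece, $L$-smoothness of $J$ together with $\theta_{t+1} - \theta_t = \alpha_t h_t^{MLMC}$ and $\|h_t^{MLMC}\| \leq G_H$ yields $(\eta_t^* - \eta_{t+1}^*)^2 = \bo{\alpha_t^2} = \bo{\alpha_t'^2}$.

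Assembling the pieces produces a one-step inequality of the form
\begin{align*}
\mathbb{E}[(\eta_{t+1} - \eta_{t+1}^*)^2] \leq \bigl(1 - \tfrac{\gamma_t}{2}\bigr)\mathbb{E}[(\eta_t - \eta_t^*)^2] + \gamma_t \cdot \wo{\tfrac{\tau_{mix}\log T_{max}}{T_{max}}} + \gamma_t^2 \cdot \wo{\tau_{mix}\log T_{max}} + \gamma_t^{-1}\cdot \bo{\alpha_t'^2}.
\end{align*}
I would then rearrange to isolate $\tfrac{\gamma_t}{2}\mathbb{E}[(\eta_t - \eta_t^*)^2]$ as a telescoping increment, sum over $t = 1, \ldots, T$, and divide by $T\gamma_T \asymp T^{1-\nu}$. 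The four summands in the statement emerge as follows: the initial condition contributes $\bo{T^{\nu-1}}$; the squared bias passes through essentially untouched because $\sum_t \gamma_t \asymp T\gamma_T$, yielding $\wo{\tau_{mix}\log T_{max}/T_{max}}$; the variance aggregates via $\sum_t \gamma_t^2 \asymp T^{1-2\nu}$, producing $\wo{\tau_{mix}\log T_{max}}\bo{T^{-\nu}}$; and the drift aggregates via $\sum_t \gamma_t^{-1}\alpha_t'^2 \asymp T^{1+\nu-2\sigma}$, producing $\bo{T^{-2(\sigma-\nu)}}$. The principal technical obstacle is bookkeeping the AdaGrad-scaled actor step inside the drift term, since $\alpha_t = \alpha_t'/\sqrt{\sum_k \|h_k^{MLMC}\|^2}$ is data-dependent; I would reduce to the deterministic schedule $\alpha_t'$ using the uniform bound $\|h_t^{MLMC}\| \leq G_H$, after which the drift contribution slots cleanly into the $T^{-2(\sigma-\nu)}$ rate.
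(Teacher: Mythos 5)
Your proposal is correct and lands on exactly the fix the paper makes: the spurious $\wo{\sqrt{\tau_{mix}\log T_{\max}/T_{\max}}}$ disappears once the Markov-averaging bias of $f_t^{j_{\max}}$ enters the accumulated error through its \emph{second} moment (Lemma \ref{lemma:a6}) rather than its first. The difference is purely in the bookkeeping of the cross term $\sum_t \E[(\eta_t-\eta_t^*)(F'(\eta_t)-f_t)]$. The paper keeps the sum intact, applies Cauchy--Schwarz jointly over $t$ and the randomness to get $|I_2|\leq Z(T)^{1/2}\,\wo{T\tau_{mix}\log T_{\max}/T_{\max}}^{1/2}$ with $Z(T)=\sum_t\E[(\eta_t-\eta_t^*)^2]$, and then resolves the self-referential inequality $Z\leq A+2\sqrt{ZF}+2\sqrt{ZG}$ into $Z\leq 2A+16F+16G$; you instead apply Young's inequality per step, $|2\gamma_t(\eta_t-\eta_t^*)b_t|\leq\gamma_t(\eta_t-\eta_t^*)^2+\gamma_t b_t^2$, absorb the first piece into the contraction factor, and telescope. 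Both are sound and yield the same four rates; your contraction route avoids the quadratic-inequality step entirely, at the mild cost of having to verify that the drift and bias cross terms together consume strictly less than the full $2\gamma_t$ of contraction (which they do, leaving your $1-\gamma_t/2$), while the paper's route avoids per-step conditional-expectation arguments and handles the drift term $I_3$ by the same global Cauchy--Schwarz device. Your treatment of the remaining terms (initial condition via $T\gamma_T\asymp T^{1-\nu}$, variance via $\sum_t\gamma_t^2$, drift via $\sum_t\gamma_t^{-1}(\alpha_t')^2\asymp T^{1+\nu-2\sigma}$ after replacing the AdaGrad step by $\alpha_t\leq\alpha_t'$) matches the paper's exponent accounting exactly.
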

\begin{proof}
Because the proof closely resembles the original version from \cite{suttle2023beyond} with a few changes, we will only show intermediate steps for portions the changes affect. Similar to \cite{suttle2023beyond}, we recall that the average reward tracking update is given by
\begin{align}
	\eta_{t+1}=\eta_t - \gamma_t f_t, 
\end{align}
where $f_t := f_t^{\text{MLMC}}$. We can rewrite the tracking error term $(	\eta_{t+1}-\eta_{t+1}^*)^2$ as
\begin{align}
	(	\eta_{t+1}-\eta_{t+1}^*)^2 	&\leq 
	 (1- 2\gamma_t) (\eta_{t}-\eta_{t}^*)^2+ 2\gamma_t (\eta_{t}-\eta_{t}^*)(F' (\eta_{t})-f_t) + 2(	\eta_{t}-\eta_{t}^*) (\eta_{t}^*-\eta_{t+1}^*) 
	\nonumber
	\\
	&+2(\eta_{t}^*-\eta_{t+1}^*)^2+2(\gamma_tf_t)^2.\label{ref_4}
\end{align}
As in \cite{suttle2023beyond}, we take expectations and transform the expression into five separate summations,
\begin{align}
	\sum_{t=1}^{T}\E[(\eta_{t}-\eta_{t}^*)^2] 	\leq  & \underbrace{\sum_{t=1}^{T}\frac{1}{2\gamma_t} 	\E[(\eta_{t}-\eta_{t}^*)^2-(\eta_{t}-\eta_{t}^*)^2]}_{I_1}+  \underbrace{\sum_{t=1}^{T}	\E[(\eta_{t}-\eta_{t}^*)(F' (\eta_{t})-f_t)] }_{I_{2}}	\nonumber
	\\
	&+ \underbrace{\sum_{t=1}^{T}\frac{1}{\gamma_t}\E[(\eta_{t}-\eta_{t}^*) (\eta_{t}^*-\eta_{t+1}^*) ]}_{I_{3}}
+ \underbrace{\sum_{t=1}^{T}\frac{1}{\gamma_t}	\E[(\eta_{t}^*-\eta_{t+1}^*)^2]}_{I_4}+ \underbrace{\sum_{t=1}^{T}\gamma_t\E[(f_t)^2]}_{I_5}.\label{ref_5}
\end{align}
%
%
\cite{suttle2023beyond} provides bounds for $I_1, I_2, I_3, I_4$ and $I_5$. In this proof, only $I_2$ needs to be modified. So we will simply restate the bounds for the other terms and give more details for our modified $I_2$,


%
\begin{align}
I_1 &\leq \frac{r_{max}^2}{\gamma_T}, \label{proof_I_1}
\end{align}
where we use the fact that $(\eta_{t}-\eta_{t}^*)^2\leq 2r_{max}^2$.

\textbf{Bound on $I_2$:} 
\cite{suttle2023beyond} achieves this intermediate bound on the absolute value $I_2$.
\begin{align}
	|I_2|  
	&\leq \sum_{t=1}^{T}	\E\left[\big |(\eta_{t}-\eta_{t}^*)\big |\cdot \big |(F' (\eta_{t})-f_t^{j_{\max}})\big |\right] 
\end{align}
\cite{suttle2023beyond} proceeds to bound $(\eta_{t}-\eta_{t}^*)^2\leq 2r_{max}$. However, we will omit that step and bound in the term in the following way,
\begin{align}
	|I_2|  
	&\leq \sum_{t=1}^{T}	\E\left[\big |(\eta_{t}-\eta_{t}^*)\big |\cdot \big |(F' (\eta_{t})-f_t^{j_{\max}})\big |\right] 
    \\
     &\leq \sum_{t=1}^{T}	\E\big |(\eta_{t}-\eta_{t}^*)\big |\cdot \sum_{t=1}^{T}	\E\big |(F' (\eta_{t})-f_t^{j_{\max}})\big | . 
    \\
    &\leq \left(\sum_{t=1}^{T}	\E\left[\big |(\eta_{t}-\eta_{t}^*)\big |^2\right]\right)^{\frac{1}{2}}\left( \sum_{t=1}^{T}	\E\left[\big |(F' (\eta_{t})-f_t^{j_{\max}})\big |^2 \right] \right)^{\frac{1}{2}}\label{ref_9}
\end{align}
%
%
For $\left( \sum_{t=1}^{T}	\mathbb{E}\left[\big |(F' (\eta_{t})-f_t^{j_{\max}})\big |^2 \right]\right)^{\frac{1}{2}}$, we utilize Lemma \ref{lemma:a6} like \citet{suttle2023beyond} with $x_t = \eta_t, \nabla L(x_t) = \nabla F(\eta_t)$ and $l(x_t, z_t) = f_t$, and the fact that the Lipschitz constant of $\nabla F(\eta_t)$ is 1:
\begin{align}
	|I_2| &\leq \left(\sum_{t=1}^{T}	\E\left[\big |(\eta_{t}-\eta_{t}^*)\big |^2\right]\right)^{\frac{1}{2}} \wo{T\tau_{\text{mix}} \frac{\log T_{\max}}{T_{\max}}}^{\frac{1}{2}} . \label{proof_I_2}
\end{align}

\textbf{Bound on $I_3$:} 
%
%
\begin{equation}
|I_3| \leq \left( \sum_{t=1}^T \E \left[ (\eta_t - \eta_t^*)^2 \right] \right)^{1/2} \left( L^2 G_H^2 \sum_{t=1}^T \frac{\alpha_t^2}{\gamma_t^2} \right)^{1/2}. \label{ref_11}
\end{equation}

\textbf{Bound on $I_4$:} 
\begin{equation}
I_4  \leq  L^2 G_H^2 \sum_{t=1}^{T}\frac{\alpha^2}{\gamma_t}.
\end{equation}

\textbf{Bound on $I_5$:} 
\begin{equation}
I_5  \leq \sum_{t=1}^{T}\gamma_t \wo{ R^2 \tau_{\text{mix}}^{\theta_t} \log T_{\max} }. \label{proof_I_5}
\end{equation}

Combining the foregoing and recalling that $\gamma_t = (1+t)^{-\nu}, \alpha_t' = (1+t)^{-\sigma}$, $0 < \nu < \sigma < 1$, and $\alpha_t \leq \alpha_t'$, we get
\begin{align}
	\sum_{t=1}^{T} \E[(\eta_{t}-\eta_{t}^*)^2] 	
    %
    %
	&\leq  {2r_{\max}^2(1+T)^\nu} + \left[L^2G_H^2 + \wo{ \tau_{\text{mix}}\log T_{\max} } \right]\sum_{t=1}^{T}(1+t)^{-\nu} \\
	& \qquad + \left(\sum_{t=1}^{T}	\E\left[\big |(\eta_{t}-\eta_{t}^*)\big |^2\right]\right)^{\frac{1}{2}} \wo{T\tau_{\text{mix}} \frac{\log T_{\max}}{T_{\max}}}^{\frac{1}{2}}  \\
	& \qquad + \left(\sum_{t=1}^{T}\E[(\eta_t-\eta_t^*)^2]\right)^{\frac{1}{2}}\left( L^2 G_H^2 \sum_{t=1}^{T}(1+t)^{-2(\sigma-\nu)}\right)^{\frac{1}{2}},
\end{align}
where the second inequality follows from the fact that $\nu - 2\sigma < -\nu$.

Define
\begin{align}
    Z(T) &= \sum_{t=1}^{T} \E[(\eta_{t}-\eta_{t}^*)^2], \\
    F(T) &= \frac{L^2 G_H^2}{4} \sum_{t=1}^{T}(1+t)^{-2(\sigma-\nu)}, \\
    G(T) &=  T \wo{ \tau_{\text{mix}}\frac{\log T_{\max}}{T_{\max}} } \\
    A(T) &= 2r_{\max}^2(1+T)^\nu + \left[L^2G_H^2 + \wo{ \tau_{\text{mix}}\log T_{\max} } \right]\sum_{t=1}^{T}(1+t)^{-\nu} \\ \
\end{align}
The inequality can now be written as,
    \begin{align}
        Z(T) \leq A(T) + 2 \sqrt{ Z(T) } \sqrt{ F(T) } + 2 \sqrt{ Z(T) } \sqrt{ G(T) } \leq 2 A(T) + 16 F(T) + 16 G(T),
    \end{align}
By following the same steps as the critic error analysis in \cite{suttle2023beyond} to rearrange the above inequality, we achieve,
    \begin{align}
        Z(T) \leq 2 A(T) + 16 F(T) + 16 G(T).
    \end{align}
    %
    From $2 A(T) + 16 F(T) = \bo{T^{\nu}} + \bo{ T^{1 + \nu - 2\sigma}} + \bo{T^{1-\nu}}$ and using the bound $\sum_{t=1}^T (1+t)^{-\xi} \leq (1 + t)^{1-\xi} / (1 - \xi)$, we have by dividing by $T$,
    \begin{align}
        \frac{1}{T} \sum_{t=1}^T \E \left[ (\eta_t - \eta_t^*)^2 \right] \leq \bo{T^{\nu - 1}} + \bo{T^{-2(\sigma - \nu)}} 
    + \wo{  \tau_{mix} \log T_{max}} \bo{T^{-\nu}} 
    + \wo{  \tau_{mix} \frac{\log T_{max}}{T_{max} } }.
    \end{align}
    
    %
    %

\end{proof}

\section{Parameterized Policy Gradient with Advantage Estimation}
We repeat the algorithm for PPGAE as it appears in \cite{bai2023regret}.
\begin{algorithm}
    \caption{Parameterized Policy Gradient}
    \label{alg:PPGAE}
    \begin{algorithmic}[1]
        \STATE \textbf{Input:} Initial parameter $\theta_1$, learning rate $\alpha$,  initial state $s_0 \sim \rho(\cdot)$, episode length $H$ \vspace{0.1cm}
        \STATE $K=T/H$
	\FOR{$k\in\{1, \cdots, K\}$}
            \STATE $\mathcal{T}_k\gets \phi$
            
            \FOR{$t\in\{(k-1)H, \cdots, kH-1\}$}
                \STATE Execute $a_t\sim \pi_{\theta_k}(\cdot|s_t)$, receive reward $r(s_t,a_t) $ and observe $s_{t+1}$
                \STATE $\mathcal{T}_k\gets \mathcal{T}_k\cup \{(s_t, a_t)\}$
            \ENDFOR	
            
            \FOR{$t\in\{(k-1)H, \cdots, kH-1\}$}
                \STATE Using Algorithm \ref{alg:estQ}, and $\mathcal{T}_k$, compute $\hat{A}^{\pi_{\theta_k}}(s_t, a_t)$
            \ENDFOR
            \vspace{0.1cm}
      
            \STATE Compute  $\omega_k = \dfrac{1}{H}\sum_{t=t_k}^{t_{k+1}-1}\hat{A}^{\pi_{\theta}}(s_{t}, a_{t})\nabla_{\theta}\log \pi_{\theta_k}(a_{t}|s_{t})$ 
		\STATE Update parameters as
		\begin{equation}
                \label{udpates_algorotihm}
		    \theta_{k+1}=\theta_k+\alpha\omega_k
		\end{equation}
        \ENDFOR
    \end{algorithmic}
\end{algorithm}
\begin{algorithm}
    \caption{Advantage Estimation}
    \label{alg:estQ}
    \begin{algorithmic}[1]
        \STATE \textbf{Input:} Trajectory $(s_{t_1}, a_{t_1},\ldots, s_{t_2}, a_{t_2})$, state $s$, action $a$, and policy parameter $\theta$
        \STATE \textbf{Initialize:} $i \leftarrow 0$, $\tau\leftarrow t_1$
	\STATE \textbf{Define:} $N=4t_{\mathrm{mix}}\log_2T$.
	\vspace{0.1cm}
	\WHILE{$\tau\leq t_2-N$}
		\IF{$s_{\tau}=s$}
			\STATE $i\leftarrow i+1$.
			\STATE $\tau_i\gets \tau$
			\STATE $y_i=\sum_{t=\tau}^{\tau+N-1}r(s_t, a_t)$.
			\STATE $\tau\leftarrow\tau+2N$.	
		\ELSE
                \STATE {$\tau\leftarrow\tau+1$.}
            \ENDIF
	\ENDWHILE
        \vspace{0.1cm}
        \IF{$i>0$}
            \STATE $\hat{V}(s)=\dfrac{1}{i}\sum_{j=1}^i y_j$,
            \STATE $\hat{Q}(s, a) = \dfrac{1}{\pi_{\theta}(a|s)}\left[\dfrac{1}{i}\sum_{j=1}^i y_j\mathrm{1}(a_{\tau_j}=a)\right]$
        \ELSE
            \STATE $\hat{V}(s)=0$, $\hat{Q}(s, a) = 0$
        \ENDIF
	\STATE \textbf{return}  $\hat{Q}(s, a)-\hat{V}(s)$ 
    \end{algorithmic}
\end{algorithm}
\newpage
\section{Multi-level Actor-Critic}
We repeat the algorithm overview for MAC as it appears in \cite{suttle2023beyond}.
\begin{algorithm}
	\caption{\textbf{M}ulti-level Monte Carlo \textbf{A}ctor-\textbf{C}ritic (MAC)}
	\label{alg:PG_MAG}
	\begin{algorithmic}[1]
		\STATE \textbf{Initialize:} Policy parameter $\theta_0$, actor step size $\alpha_t$, critic step size $\beta_t$, average reward tracking step size $\gamma_t$, initial state $s_1^{(0)} \sim \mu_{0}(\cdot)$, maximum trajectory length $T_{\max}$.
		\FOR{$t=0$ {\bfseries to} $T-1$}
		\STATE Sample level length $j_t \sim \text{Geom}(1/2)$
		\FOR{$i = 1, \dots, 2^{j_t}$}
		\STATE Take action $a^i_t \sim \pi_{\theta_t}(\cdot | s^i_{t})$
		\STATE Collect next state $s^{i+1}_{t} \sim P(\cdot | s_t^{i}, a^i_{t})$
		\STATE Receive reward $r_t^{i} = r(s^i_t, a^i_t) $
		
		\ENDFOR
		\STATE Evaluate MLMC gradient $ f_t^{MLMC}$, $ 
 h_t^{MLMC}$, and $ g_t^{MLMC}$ via \eqref{eq:MLMC_Gradient}
		\STATE Update parameters following \eqref{Critic_update_0} 
		\ENDFOR
	\end{algorithmic}
\end{algorithm}

\end{document}